\pgfplotsset{compat=1.13}
\setlist[itemize]{leftmargin=5mm}
\def\eqref#1{Eq.~(\ref{#1})}
\def\1{\bm{1}}
\DeclareMathAlphabet{\mathsfit}{\encodingdefault}{\sfdefault}{m}{sl}
\SetMathAlphabet{\mathsfit}{bold}{\encodingdefault}{\sfdefault}{bx}{n}
\newcommand{\R}{\mathbb{R}}
\DeclareMathOperator*{\argmin}{arg\,min}
\DeclareMathOperator{\Tr}{Tr}
\newcommand{\ud}{\,\mathrm{d}}
\def\R{{\mathbb R}}
\newcommand{\eqdef}{\ensuremath{\stackrel{\mbox{\upshape\tiny def.}}{=}}}
\let\on=\operatorname
\numberwithin{equation}{section}
\theoremstyle{plain}
\newtheorem{theorem}{Theorem}
\newtheorem{lemma}[theorem]{Lemma}
\newtheorem{corollary}[theorem]{Corollary}
\newtheorem{proposition}[theorem]{Proposition}
\theoremstyle{definition}
\newcommand{\x}{\mathbf{x}}
\renewcommand{\v}{\mathbf{v}}
\newcommand{\z}{\mathbf{z}}
\newcommand{\q}{\mathbf{q}}
\newcommand{\p}{\mathbf{p}}
\newcommand{\UpDown}{\texttt{UpDown}}
\newcommand{\ResNet}{\texttt{ResNet}}
\newcommand{\ResNets}{\texttt{ResNet}s}
\newcommand{\PyTorch}{\texttt{PyTorch}}
\newcommand{\Adam}{\texttt{Adam}}
\newcommand{\ReLU}{\texttt{ReLU}}
\newcommand{\inflation}{\alpha}
\newcommand{\samplesize}{n}
\definecolor{b1}{HTML}{1d3557}
\definecolor{r1}{HTML}{e76f51}
\definecolor{m1}{HTML}{2a9d8f}
\definecolor{mygreen}{rgb}{0.0,0.3020,0.2196}
\definecolor{backprop}{rgb}{0.8235,0.2902,0.3098}
\newcommand{\mnrev}[1]{{\color{black}{#1}}}
\newcommand{\acksection}{\section*{Acknowledgments and Disclosure of Funding}}
\author{Fran\c{c}ois-Xavier Vialard\thanks{LIGM, Univ Gustave Eiffel, CNRS, \texttt{francois-xavier.vialard@u-pem.fr}}\;
  \and Roland Kwitt\thanks{
  Department of Computer Science
  University of Salzburg;
  \texttt{Roland.Kwitt@sbg.ac.at}}\;
  \and
  Susan Wei\thanks{
  School of Mathematics and Statistics
  University of Melbourne;
  \texttt{susan.wei@unimelb.edu.au}}\;
  \and
  Marc Niethammer\thanks{
  Department of Computer Science
  University of North Carolina at Chapel Hill; 
  \texttt{mn@cs.unc.edu}}
}
\title{A Shooting Formulation of Deep Learning}
\begin{document}

\maketitle

\begin{abstract}
A residual network may be regarded as a discretization of an ordinary differential equation (ODE) which, in the limit of time discretization, defines a continuous-depth network. Although important steps have been taken to realize the advantages of such continuous formulations, most current techniques assume \textit{identical} layers. Indeed, existing works throw into relief the myriad difficulties of learning an infinite-dimensional parameter in a continuous-depth neural network. To this end, we introduce a shooting formulation which shifts the perspective from parameterizing a network layer-by-layer to parameterizing over \textit{optimal} networks described only by a set of \textit{initial conditions}. For scalability, we propose a novel particle-ensemble parameterization which fully specifies the optimal weight trajectory of the continuous-depth neural network. Our experiments show that our particle-ensemble shooting formulation can achieve competitive performance. 
Finally, though the current work is inspired by continuous-depth neural networks, the particle-ensemble shooting formulation also applies to discrete-time networks and may lead to a new fertile area of research in deep learning parameterization.
\end{abstract}

\section{Introduction}

Deep neural networks (DNNs) are closely related to optimal control (OC) where the sought-for control variable corresponds to the parameters of the DNN ~\cite{liu2019deep,li2017maximum,haber2017stable}. To be able to talk about an \emph{optimal} control requires the definition of a control cost, i.e., a norm on the control variable. We explore the ramifications of such a control cost in the context of DNN parameterization. For simplicity, we focus on continuous formulations in the spirit of neural ODEs~\cite{chen2018neural}. However, both discrete and continuous OC formulations exist~\cite{brysonapplied,athans2013optimal,troutman2012variational}; our approach could be developed for both. 

\begin{figure}[h!]
\centering{
\includegraphics[width=\textwidth]{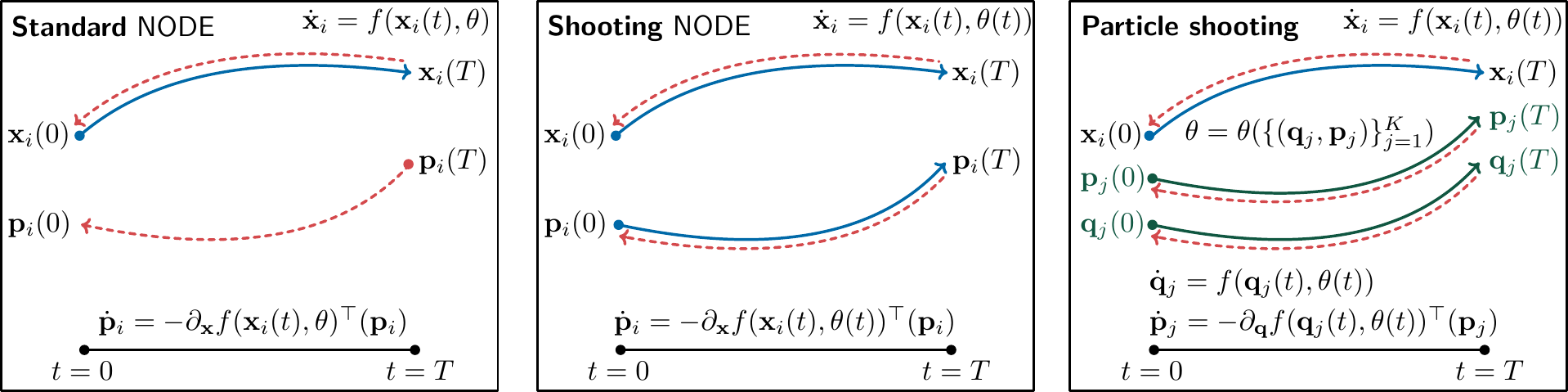}}
\caption{Optimization in the neural ODE (NODE) framework \cite{chen2018neural} (left) amounts to a forward pass with the gradient computed via \textcolor{backprop}{backpropagation} (\protect\tikz[baseline]{\protect\draw[line width=0.3mm,densely dashed,color=backprop] (0,.6ex)--++(0.3,0) ;}). Optimization under the shooting principle (middle) turns the forward-backward system into a forward \textit{second-order} system, where we essentially run the backpropagation equation forward.  We use a Hamiltonian particle ensemble (right) consisting of $K$ (position, momentum) pairs $(\q_j, \p_j)$ to make shooting efficient. Note that we write $\theta = \theta(\{(\q_j,\p_j) \}_{j=1}^K)$ since $\theta$ satisfies a compatibility equation which involves all $K$ particles. In shooting $\theta$ is time-dependent, in standard NODE $\theta(t)=\theta~\forall t$.
\label{fig:shooting_principle}}
\vspace{-0.4cm}
\end{figure}

Initial work on continuous DNN formulations was motivated by the realization that a \ResNet\ \cite{he2016deep,he2016identity} resembles Euler forward time-integration~\cite{haber2017stable,li2017maximum}. Specifically, the forward pass of some input vector $\tilde{\x} \in \mathbb R^d$ through a network with $L$ layers, specified as $\x(0) = \tilde{\x}$ and
$
\x(j+1) = \x(j) + f(\x(j),\theta(j)), \, j=0,1,\ldots,L, 
$
closely relates to an explicit Euler~\cite{trefethen1996finite} discretization of the ODE
\begin{equation}
\dot \x(t) = f(t,\x(t),\theta(t)), \quad  \x(0)=\tilde{\x}, \quad 0\le t \le T\enspace.
\label{ode_constraint}
\end{equation}
In the continuous DNN formulation, we seek an optimal $\theta$ such that the terminal prediction given by $\x(T)$, i.e., the solution to \eqref{ode_constraint} at time $T$, minimizes $\ell(\x(T))$ for a task-specific loss function $\ell$.

Although \eqref{ode_constraint} with time-varying parameter $\theta(t)$ can be considered as a neural network with an infinite number of layers, current implementations of ODE-inspired networks largely assume the parameters $\theta$ are fixed in time, i.e., $\forall t: \theta(t) = \theta$ ~\cite{chen2018neural,dupont2019augmented}, or follow some prescribed dynamics~\cite{zhang2019anodev2}. Instead, we explore time-varying $\theta(t)$ by employing regularization (i.e., a control cost) to render the estimation well-posed and to assure regularity of the resulting flow. Specifically, (for a single data point) we propose minimizing over $\theta$ the regularized loss
\begin{equation}
\mathcal E(\theta ) = \int_0^T R(\theta(t)) \,\ud t + \gamma~ \ell(\x(T)), \quad \gamma \in \mathbb R^+,\quad \text{subject to \eqref{ode_constraint}}\enspace,
\label{complete_energy}
\end{equation}
where $R$ is a real-valued complexity measure of $\theta$ corresponding to the control cost. We will mostly work with the Frobenius norm but $R(\theta(t))$ can be more general (see Appendix \ref{appendix:Barron}). 

Instead of \emph{directly} optimizing over the set of time-dependent $\theta(t)$ as in standard \ResNets, {\it we restrict the optimization set to those  $\theta$ which are critical points of $\mathcal E(\theta)$, thereby dramatically reducing the number of parameters.}
In doing so, one can describe the optimization task as an \textit{initial value problem}. Namely, we show that
we can rewrite the loss in~\eqref{complete_energy} solely in terms of the input $\x(0)$ and a corresponding finite-dimensional momentum variable, $\mathbf{p}(0)$. Such an approach, just like optimizing the initial speed of a mass particle to reach a given point, is called a \textit{shooting method} in numerical analysis~\cite{press2007numerical} and control~\cite{bonnans2013shooting}, giving its name to our new formulation. 

The first two panels of Fig.~\ref{fig:shooting_principle} illustrate the difference between the optimization of a neural ODE (NODE) via~\cite{chen2018neural} and our shooting formulation. Since in practice, we have multiple inputs $\tilde{\x}_i, i=1,\ldots,n$, there is an initial momentum vector $\mathbf{p}_i$ corresponding to each of them. If the shooting formulation is to scale up to a large sample size $n$, we must take care that the parameterization does not grow linearly with $n$. To this end, we propose what we call the \emph{Hamiltonian particle-ensemble parameterization}. It is a finite set of particles, where each particle is a (position, momentum) pair. The initial conditions of these particle pairs $\{(\q_j,\p_j)\}_{j=1}^K$ (where $K\ll n$) completely determine $\theta(t)$. This is illustrated in the rightmost panel of Fig.~\ref{fig:shooting_principle}. Once the optimized set of particles has been computed, the computational efficiency of the forward model, similarly to NODE~\cite{chen2018neural}, is retained for vector fields $f$ that are linear in their parameters $\theta(t)$.

Our \textbf{contributions} are as follows:
1) We introduce a shooting formulation for DNNs, amounting to an initial-value formulation for neural network parameterization. This allows for optimization over the original network parameter space via optimizing over the initial conditions of critical networks only;
2) We propose an efficient implementation of the shooting approach based on a novel particle-ensemble parameterization in which a set of initial particles (the (position, momentum) pairs) describe the space of putative optimal network parameters; 
3) We propose the \UpDown\ model which gives rise to explicit shooting equations;
4) We prove universality for the flows of the \UpDown\ vector field and demonstrate in experiments its good performance on several prediction tasks.

\vspace{-0.2cm}
\section{Related work}
We draw inspiration from two separate branches of research: 1) continuous formulations of neural networks~\cite{chen2018neural} and 2) shooting approaches for deformable image registration~\cite{Vialard2012,Miller2006,Niethammer2011}.

\textbf{Continuous-depth neural networks.} 
Continuous equivalents of \ResNets\ ~\cite{he2016deep,he2016identity} have been developed in~\cite{ruthotto2019deep,haber2017stable}, but na\"ive implementations are memory-demanding since backpropagation requires differentiating through the numerical integrator. Two approaches can address this unfavorable memory footprint. NODE~\cite{chen2018neural} does not store intermediate values in the forward pass, but recomputes them by integrating the forward model backward. This is easily possible only if the forward model is numerically invertible and the formulation is time-continuous~\cite{gholami2019anode}\footnote{In a discrete setting, resolving the forward model in the backward direction generally requires costly solving of implicit equations. This can be done (it is, e.g., done for invertible \ResNets\ ~\cite{behrmann2018invertible}). In general, an explicit numerical solution for forward time-integration becomes implicit in the backward direction and vice versa.}. Instead, checkpointing ~\cite{gholami2019anode} is a general approach to reduce memory requirements by selectively recomputing parts of the forward solution~\cite{griewank2008evaluating}. Our work can easily be combined with these numerical approaches.

\textbf{Solving implicit equations.}
A recent line of works, including deep equilibrium models~\cite{bai2019deep} and implicit residual networks~\cite{reshniak2019robust}, has shown that it may not always be necessary to freely parameterize all the layers in the network. Specifically, in~\cite{bai2019deep} and~\cite{reshniak2019robust}, the parameters of each layer are defined via an implicit equation motivated by \emph{weight tying} thus improving expressiveness and reducing the number of parameters while decreasing the memory footprint via implicit differentiation. Instead, our work increases expressiveness and reduces the number of parameters via particle-based shooting.

\textbf{Invertibility and expressiveness.}
Based on similarity with continuous time integration, constraining the norm of a layer in a \ResNet\ will result in an invertible network such as in ~\cite{behrmann2018invertible,jacobsen2018revnet}. Invertibility is also explored in~\cite{younes2018diffeomorphic}, where it is enforced (as in our setting) via a penalty of the norm. These works show that standard learning tasks can be performed on top of a one-to-one transformation. Recent theoretical developments~\cite{zhang_approximation_2020} show that indeed capping a NODE or i-ResNet~\cite{behrmann2018invertible} with a single linear layer gives universal approximation for non-invertible continuous functions. Further, expressiveness can be increased by moving to more complex models, e.g., by introducing additional dimensions as explored in augmented NODE~\cite{dupont2019augmented}. In \cite{zhang2019anodev2} (\texttt{AnodeV2}), Zhang et al. treat time-dependent $\theta(t)$. Weights are evolved jointly with the state of the continuous DNN. While this weight evolution could, in principle, also be captured by a learned weight network, the authors argue that this would result in a large increase in parameters and therefore opt for explicitly parameterizing these evolutions (e.g., via a reaction diffusion equation). In contrast, our method does not rely on learning a separate weight-network or on explicitly specifying a weight evolution. Instead, our evolving weights are a direct consequence of the shooting equations which, in turn, are a direct consequence of penalizing network parameters (the control cost) over time; a large increase in parameters does not occur.

\textbf{Hamiltonian approaches.}
Toth et al.~\cite{toth2019hamiltonian} proposed Hamiltonian generative networks to learn the Hamiltonian governing the evolution of a physical system. Specifically, they learn Hamiltonian vector fields in the latent space of an image encoder-decoder architecture. S{\ae}mundsson et al.~\cite{saemundsson2019variational} also learn the underlying dynamics of a system from time-dependent data, starting from a discrete Lagrangian combined with a variational integrator. This motivates particular network structures; e.g., Newtonian networks where the potential energy is learned via a neural network. Although sharing common tools, our work completely differs from this line of research in the sense that we exploit Hamiltonian mechanics to parameterize \emph{general} continuous neural networks. In principle, our work applies to most network architectures and is not specific to physical data.

Finally, we mention that shooting approaches have been applied successfully in other areas such as diffeomorphic image matching \cite{Miller2006,Vialard2012,Niethammer2011}. However, the decisive difference here is in the dimensionality of the underlying space: in diffeomorphic image registration, the data are points in a 3D volume i.e., $d=3$; for DNNs applications, data points usually lie in a much higher-dimensional space, i.e., $d$ is very large. 



\section{Shooting formulation of ODE-inspired neural networks}
\label{sec:general_framework}

We consider, for simplicity, a supervised learning task where the input and target spaces are $\mathcal X \subset \mathbb R^d$ and $\mathcal Y$, resp., and sampled data are denoted 
by $\{(\tilde{\x}_i,\tilde{\mathbf{y}}_i)\}_{i=1}^\samplesize \subset \mathcal X \times \mathcal Y$. The goal is to learn the weight $\theta(t)$ in the following flow equation 
\begin{equation}\label{EqFlow}
\dot{\x}_i(t) = f(\x_i(t),\theta(t)), \quad \x_i(0) = \tilde{\x}_i, \quad 0\le t \le T, \quad i=1,\ldots,\samplesize
\end{equation}
such that it minimizes the loss $\sum_{i = 1}^\samplesize \ell(\x_i(T),\tilde{\mathbf{y}}_i)$ for some loss function $\ell$. In existing works, the weight is chosen independent of time, i.e., $\theta(t) = \theta$ \cite{chen2018neural}, or specific evolution equations are postulated for it~\cite{massaroli_dissecting_2020,zhang2019anodev2}. Such strategies show the difficulty of addressing infinite dimensional parameterizations of time-dependent $\theta$ and the need for regularization for well-posedness~\cite{finlay_how_2020,massaroli_dissecting_2020,haber2017stable}. Instead of parameterizing $\theta(t)$ directly, we aim at penalizing $\theta(t)$ according to the regularity of $f(\cdot,\theta(t))$ to arrive at a well-posed problem. Specifically, we consider a regularization term $R(\theta(t))$ (discussed in \S\ref{SecChoiceOfReg}) and propose to minimize over $\theta$
\begin{equation}\label{EqVariationalFormulation}
\mathcal E_n(\theta) = \int_0^T R(\theta(t)) \,\ud t + \gamma \sum_{i=1}^\samplesize \ell(\x_i(T),\tilde{\mathbf{y}}_i), \quad \gamma \in \mathbb R^+, \quad \text{subject to \eqref{EqFlow}}\enspace.
\end{equation}
Note that upon discretizing the time $t$ (i.e., having a number of parameters proportional to the number of timesteps) this is similar to a \ResNet\ with weight decay.
For a \ResNet\ or a NODE, optimization is based on computing the parameter gradient via a forward pass followed by backpropagation (see left panel of Fig.~\ref{fig:shooting_principle}).

\textbf{Optimality equations.}
The optimality conditions for~\eqref{EqVariationalFormulation} in continuous time are:
\begin{align}
\begin{cases}
\dot{\x}_i(t) - f(\x_i(t),\theta(t)) = 0,~\x_i(0) = \tilde{\x}_i, & \text{Data evolution} \\
\dot{\p}_i(t) + \partial_\x f(\x_i(t),\theta(t))^\top(\p_i) = 0,~\p_i(T) = -\gamma \nabla \ell(\x_i(T),\tilde{\mathbf{y}}_i),& \text{Adjoint evolution}\\
\partial_\theta R(\theta(t)) - \sum_{i = 1}^\samplesize\partial_\theta f(\x_i(t),\theta(t))^\top(\p_i(t)) = 0 
 \,. & \text{Compatibility}
 \end{cases}
\label{EqOptimalityEquations}
\end{align}

The first equation describes evolution of the input data and the second equation is the adjoint equation solved backward in time in order to compute the gradient with respect to the parameters. At convergence, the third equation is also satisfied. This last equation encodes the optimality of the layer at timestep $t$, as it is the case for an \emph{argmin layer} or \emph{weight tying}~\cite{br2017optnet}. Its left hand side corresponds to the gradient with respect to the parameter $\theta$, but as we shall see it will allow us to compute $\theta$ directly via our (position, momentum) pairs in our particle shooting formulation. The shooting approach simply replaces the optimization set by the set of critical points of \eqref{EqVariationalFormulation} expressed in these optimality conditions. That is, we only optimize over solutions fulfilling~\eqref{EqOptimalityEquations}.

\textbf{Shooting principle.} The shooting method is standard in optimal control~\cite{bonnans2013shooting} and can be formulated as follows: since, at optimality, the system in \eqref{EqOptimalityEquations} is satisfied, one can \emph{turn this system into a forward model} defined only by its initial conditions $\{(\x_i(0),\p_i(0))\}_{i=1}^\samplesize$ which specify the \emph{entire trajectory} of optimal parameters. \mnrev{We evolve \emph{both} the data and adjoint evolution equations forward in time and compute at each time, $t$, $\theta(t)$ from the compatibility \eqref{EqOptimalityEquations} via the current values of $\{(\x_i(t),\p_i(t))\}_{i=1}^\samplesize$.} We refer to the forward model defined by \eqref{EqOptimalityEquations} as the \textit{shooting equations}.
Unfortunately, this initial-condition parameterization still requires \emph{all} initial conditions $\x_i(0)$ and their corresponding momenta $\p_i(0)$ for $i=1,\ldots,\samplesize$. Since this does not scale to very large datasets, we propose an approximation using a collection of particles, as described next.

\textbf{Hamiltonian particle ensemble.}
In the limit and ideal case where the data distribution is known, the optimality equations can be approximated using a collection of particles which follow the Hamiltonian system (see Appendix \ref{appendix:SecExpectationApproximation}). 
We thus consider a collection of particles $\{(\q_j,\p_j)\}_{j = 1}^K \in \R^d \times \R^d$ that drive the evolution of the entire population $\{\x_i\}_{i = 1}^\samplesize \subset \R^d$ through the following forward model
\begin{align}\label{EqShootingForward}
\begin{cases}
\,\dot{\x}_i(t) - f(\x_i(t),\theta(t)) = 0,~\x_i(0)=\tilde{\x}_i & \text{Data evolution}\\
 \left.\begin{aligned}
&\dot{\mathbf{q}}_j(t) - f(\mathbf{q}_j(t),\theta(t)) = 0, \\
&\dot{\mathbf{p}}_j(t) + \partial_\q f(\mathbf{q}_j(t),\theta(t))^\top(\mathbf{p}_j(t)) = 0, \\
&\partial_\theta R(\theta(t)) - \sum_{j = 1}^K \partial_\theta f(\mathbf{q}_j(t),\theta(t))^\top(\mathbf{p}_j(t)) = 0 \,, 
       \end{aligned}
 \right\}
 & \text{Hamiltonian equations}
 \end{cases}
\end{align}
with initial conditions $\{(\q_j(0),\p_j(0))\}_{j = 1}^K$, where the gradient with respect to this new parameterization is computed via backpropagation\mnrev{, and typically $K\ll n$.} This set of (position, momentum) pairs is termed the \emph{Hamiltonian particle ensemble}.  As the number of particles is reduced, so are the number of free parameters, see Appendix \ref{appendix:freeparams}. Indeed, varying the Hamiltonian particle ensemble allows for controlling the tradeoff between reconstruction and network complexity. \mnrev{Note that the main difference to the shooting formulation of \eqref{EqOptimalityEquations} is that the parameterization, $\theta(t)$, is now retrieved from the shooting equations as specified by the particle collection. The original data samples, $\tilde{\x}_i$, are simply propagated via these parameters.}

\subsection{Choices of regularization, parameterization and conserved quantities}\label{SecChoiceOfReg}
The main computational bottleneck in the forward model of \eqref{EqShootingForward} is the implicit parameterization of $\theta$ by the last equation. Making it explicit is key to render shooting computationally tractable.

\textbf{Linear in parameter\footnote{Obviously, an affine function of the parameters also works similarly.} - quadratic penalty.} In the simplest case, the space of functions $f$ is a linear space parameterized by $\theta(t)$. In this case, a quadratic penalty amounts to a kinetic penalty. Specifically, as a motivating example, consider the 
forward model
\begin{equation}
f(\x(t),\theta(t)) =  A(t) \sigma(\x(t)) + b(t),
\label{EqAffineVectorField}
\end{equation}
where $\sigma$ is a component-wise activation function,
 $A \in L^2([0,1], \mathbb R^{d^2})$, $b \in L^2([0,1],\mathbb{R}^d)$ and 
$\theta(t) = [A(t),b(t)]$. With the quadratic regularizer $R(\theta(t)) = \frac{1}{2} \Tr\left(A(t)^\top M_{A} A(t)\right) + \frac{1}{2} b(t)^\top M_{b} b(t)$, 
where $M_{A}$, $M_{b}$ are positive definite matrices,
the particle shooting equations are
\begin{equation}
\begin{cases}
\dot \q_j(t) \!&= A(t) \sigma(\q_j(t)) + b(t), \\
\dot{\mathbf{p}}_j(t) \! &=  - \ud \sigma(\q_j(t))^\top A(t)^\top \p_j(t),  \\
\end{cases}\quad
\begin{cases}
A(t) \! &= {M_{A}}^{-1}(-\sum_{j=1}^K \p_j(t) \sigma(\q_j(t))^\top)\\
b(t) \! &= {M_{b}}^{-1}(-\sum_{j=1}^K \p_j(t))\,,
\end{cases}
\label{EqShootingLinearParameterQuadraticPenalization}
\end{equation}
with given initial conditions $(\p_j(0),\q_j(0))$. We emphasize that $\theta(t)$ is \textit{explicitly defined} by $\{(\p_j(t),\q_j(t))\}_{j=1}^K$ and the computational cost is reduced to matrix multiplications.
\par
As is well-known \cite{Arnold1978MathematicalMO}, the Hamiltonian flow preserves the Hamiltonian function. In the ``linear in parameter - quadratic penalty'' case, this preserved quantity, denoted 
$$H(\p(t),\q(t)) = R(\theta(t)),$$
corresponds to a (kinetic) energy of the system of particles. As a first consequence, the objective functional can be rewritten as
$$
H(\p(0),\q(0))) + \gamma \sum_{i=1}^\samplesize \ell(\x_i(T),\tilde{\mathbf{y}}_i)\,.
$$
This clearly allows for direct optimization on $(\p(0),\q(0))$, i.e., shooting.
As a second consequence, since the vector field has constant norm 
(its squared norm is the Hamiltonian), it gives a quantitative bound on the regularity of the flow map at time $t=T$ explicit in terms of $H(\p(0),\q(0))$.
In addition (Appendix \ref{appendix:SecExpectationApproximation}), the Rademacher complexity of the generated flows with bounded $H(\p(0),\q(0)))$ can also be controlled.

\textbf{Nonlinear in parameter and non-quadratic penalty.}
A standard \ResNet\ structure uses vector fields of the type (in convolutional form or not) 
\begin{equation}\label{Eq:SingleHiddenLayer}
f(\x(t),\theta(t)) = \theta_1(t) \sigma(\theta_2(t) \x(t) + b_2(t)) + b_1(t)\enspace,
\end{equation}
where $\theta_1(t) \in L(\R^{d'},\R^d)$ and $\theta_2(t) \in L(\R^d,\R^{d'})$. We will refer to \eqref{Eq:SingleHiddenLayer} as the \textbf{single-hidden-layer} vector field.
This model can also be handled in our shooting approach since the shooting equations in \eqref{EqOptimalityEquations} are completely specified by the Hamiltonian 
$$H(\p,\q,\theta) = R(\theta) - \p^\top f(\q,\theta). $$ Automatic differentiation can be used (see Appendix~\ref{appendix:automatic_shooting}) to implement the forward model
\begin{equation}
\dot{\q}(t) = \frac{\partial H}{\partial \p} (\p(t),\q(t),\theta(t)),~
\dot{\p}(t) = -\frac{\partial H}{\partial \q}(\p(t),\q(t),\theta(t)),~
\theta(t) \in \argmin H(\p(t),\q(t),\theta(t)).
\label{EqTrueHamiltonEquations}
\end{equation}
Note that a necessary condition for solving the third equation above is in fact the compatibility equation in \eqref{EqShootingForward}. 
Important bottlenecks appear since the third equation is nonlinear and potentially associated with a non-convex optimization problem. This could be addressed by unrolling the optimization corresponding to the last equation, resulting in increased computational cost. In addition, in this nonlinear case, the Hamiltonian function is no longer (in general) equal to $R(\theta(t))$ even in the quadratic regularization setting. Therefore, results on the smoothness or Rademacher complexity would no longer be guaranteed as for the linear - quadratic penalty case. Last, quadratic regularization has no known theoretical results for the Rademacher complexity of functions generated by \eqref{Eq:SingleHiddenLayer} with bounded norm. Norms for which the Rademacher complexity of this class of functions is known~\cite{e2019barron} to be bounded are called Barron norms, which are non-smooth and non-convex, and which would add to the difficulty.
To circumvent these issues while retaining expressiveness and theoretical guarantees in the linear parameterization setting, we next introduce the \UpDown\ model.

%

\subsection{The \UpDown\ model}
\label{sec:updownNODE}
The key idea is to transform the vector field of \eqref{Eq:SingleHiddenLayer} into a model which is linear in parameters on which the quadratic regularization can be applied. 
 To this end, we introduce the additional state 
 $$\v(t) = \theta_2(t) \x(t) + b_2(t)$$ 
 which we differentiate with respect to time to obtain $$\dot{\v}(t) = \dot{\theta}_2(t) \x(t) + \dot{b}_2(t) + \theta_2(t) \dot{\x}(t)\enspace.$$ 
 Replacing $\dot{\x}(t)$ by its formula, we get
$$
\dot{\v}(t) = \dot{\theta}_2(t) \x(t) + \dot{b}_2(t) + \theta_2(t) (\theta_1(t) \sigma (\v(t)) + b_1(t))\enspace.
$$
 Now overloading on notation slightly, we use the additional state variable $\v(t)$ to propose the following ODE system, denoted the \UpDown\ model:
\begin{equation}
\dot{\x}(t) = \theta_1(t) \sigma(\v(t)) + b_1(t),\quad
\dot{\v}(t) = \theta_2(t) \x(t) + b_2(t) + \theta_3(t) \sigma(\v(t))\,,
\label{updownODE}
\end{equation}
with $\x(t) \in \R^d$, $\v(t) \in \R^{\inflation d}$ and introducing the (integer-valued) \emph{inflation factor} $\inflation \geq 1$.  For the data evolution, $\x_i(0)$ are given by the data $\{\tilde{\x}_i\}$. We parameterize the $\v_i(0)$ using an affine map $g_{\Theta}$, i.e., 
$$
\v_i(0) = g_{\Theta}(\x_i(0)) = \Theta_{12}(\x_i(0)) + b_{12}, 
$$
 where $\Theta_{12} \in L(\R^d,\R^{\inflation d})$ and $b_{12} \in L(\R^{\inflation d})$.
In Appendix \ref{appendix:updown_universal}, we prove the following theorem:
\vskip2ex
\begin{theorem}
Given a time-dependent vector field defined on a compact domain $C$ of $\R^d$, which is time continuous and Lipschitz, we denote by $\varphi(T,\x(0))$ its flow at time $T$ from starting value $\x(0)$. Then, there exists a parameterization of the \emph{\UpDown}\ model for which
its solution is $\varepsilon$-close to the flow, 
$\sup_{\x(0) \in C}\| \varphi(T,\x(0)) - \x(T) \| \leq \varepsilon $.
\label{thm:universal}
\end{theorem}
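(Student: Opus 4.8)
The plan is to reduce the statement to the classical universal approximation theorem for single-hidden-layer networks, using the auxiliary state $\v$ to realize a \emph{fixed} bank of hidden neurons whose output weights are free to vary in time. Throughout, write $w(t,\x)$ for the given target vector field and $\varphi(T,\cdot)$ for its flow. First I would reduce \emph{flow} approximation to \emph{vector-field} approximation: if a candidate \UpDown\ model induces an effective $\x$-dynamics $\dot\x=\tilde w(t,\x)$ with $\sup_{t,\x}\|\tilde w(t,\x)-w(t,\x)\|\le\delta$ on a compact set containing all trajectories, then Gronwall's inequality gives $\sup_{\x(0)\in C}\|\x(T)-\varphi(T,\x(0))\|\le \delta\,T e^{LT}$, where $L$ is the Lipschitz constant of $w$ in $\x$; choosing $\delta$ small makes this $\le\varepsilon$. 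Because $\sigma$ is Lipschitz, $\tilde w$ will be globally Lipschitz, so trajectories exist on $[0,T]$ and the containment/Gronwall step is routine; if needed I first extend $w$ off $C$ to a globally Lipschitz field (McShane/Kirszbraun) and enlarge $C$ to a compact $C'$ with margin.

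The key structural step is to slave $\v$ to $\x$ exactly. I would choose the initial map $g_{\Theta}$ so that $\v(0)=W\x(0)+c$ for a fixed $W\in\R^{\inflation d\times d}$, $c\in\R^{\inflation d}$, and set the parameters of the $\v$-equation to $\theta_2(t)=0$, $\theta_3(t)=W\theta_1(t)$, $b_2(t)=Wb_1(t)$. Then the affine manifold $\{\v=W\x+c\}$ is invariant: indeed $\dot\v=W\theta_1(t)\sigma(\v)+Wb_1(t)=W\dot\x$, so $\v(t)=W\x(t)+c$ for all $t$. Substituting into the first \UpDown\ equation collapses the $\x$-dynamics to the single-hidden-layer ODE $\dot\x=\theta_1(t)\sigma(W\x+c)+b_1(t)$, with \emph{fixed} inner weights $W,c$ and \emph{time-varying} output weights $\theta_1(t)$ and bias $b_1(t)$. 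This is exactly the representation the \UpDown\ model was engineered to provide, and it removes $\v$ from the problem.

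It remains to approximate $w(t,\cdot)$ uniformly in $t$ by such a network with a single fixed neuron bank. Here I would exploit compactness: joint continuity of $w$ on $[0,T]\times C'$ makes $\{w(t,\cdot):t\in[0,T]\}$ a compact subset of $C(C',\R^d)$. Covering it by finitely many $\delta$-balls around $w(t_1,\cdot),\dots,w(t_m,\cdot)$, approximating each $w(t_j,\cdot)$ by a single-hidden-layer net (Cybenko/Pinkus, valid for non-polynomial Lipschitz $\sigma$), and taking the \emph{union} of all the resulting neurons yields one fixed bank $(W,c)$ of some finite width $N$; every $w(t,\cdot)$ is then within $\delta$ of a combination over this bank with $t$-dependent coefficients. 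I then set the inflation factor via $\inflation d=N$ (padding with inert neurons), define $\theta_1(t),b_1(t)$ to be those coefficients, and smooth them in $t$ so that $t\mapsto\theta_1(t)$ is continuous and the ODE is well-posed.

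The main obstacle is precisely this last step: ordinary universal approximation lets the inner weights depend on the target, but the invariant-manifold trick forces $W,c$ to be \emph{shared across all times}, so the argument must produce one bank that simultaneously serves the whole continuum $\{w(t,\cdot)\}_t$. This is what the compactness-plus-union construction delivers, at the cost of a width $N$ that grows as $\varepsilon\to0$ (so $\inflation$ is chosen depending on $\varepsilon$). Assembling the reduction, the exact slaving of $\v$, and the uniform-in-time approximation, and invoking Gronwall, yields the claimed $\varepsilon$-closeness $\sup_{\x(0)\in C}\|\varphi(T,\x(0))-\x(T)\|\le\varepsilon$.
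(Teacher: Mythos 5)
Your proposal is correct, but it takes a genuinely different route from the paper's proof in Appendix~\ref{appendix:updown_universal}. Both arguments share the same skeleton at the ends: a Gronwall reduction from flow-closeness to vector-field-closeness, and an appeal to single-hidden-layer universal approximation (Cybenko); both also treat trajectory containment at the same informal level (the paper's $B(0,r)$, your enlarged $C'$). The difference is in how time-dependence of the \emph{inner} weights is eliminated. The paper approximates $w(t,\cdot)$ by a field that is \emph{piecewise constant in time}, each time interval carrying its own single-hidden-layer network, and then shows (Lemmas~\ref{ThLemma1} and~\ref{lem:updown_shl}) that the \UpDown\ model reproduces such flows \emph{exactly}: the slaving relation $\v(t)=\theta_2(t)\x(t)+b_2(t)$ is differentiated in time (hence the piecewise-$C^1$ hypothesis there), and one copy of the hidden state is spent per time interval, so the inflation factor counts time pieces. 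You instead freeze the inner weights once and for all via the invariant affine manifold ($\theta_2\equiv 0$, $\theta_3=W\theta_1$, $b_2=Wb_1$, $\v(0)=W\x(0)+c$), and push all time-dependence into the output weights $\theta_1(t),b_1(t)$; this forces your compactness/covering/union construction of a single neuron bank valid for all $t$, so your inflation factor is governed by the covering number of $\{w(t,\cdot)\}_{t\in[0,T]}$ in sup norm rather than by a temporal resolution. What the paper's route buys: a lemma of independent interest (every single-hidden-layer NODE flow with piecewise-continuous parameters is \emph{exactly} an \UpDown\ flow) and the freedom to pick fresh inner weights on each interval directly from standard universal approximation. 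What your route buys: no time-differentiation of parameters, no sequentially activated hidden copies, and genuinely continuous (even smooth) parameter trajectories, using only the $\theta_2\equiv 0$ slice of the model. One step you should make explicit: when you ``smooth'' the coefficients in $t$, glue the per-ball coefficients with a partition of unity $\{\phi_j\}$ on $[0,T]$ subordinate to your cover; because the network output is linear in the output coefficients, $\|w(t,\cdot)-\sum_j \phi_j(t)\,N_j\|_\infty \le \max_{j:\,\phi_j(t)>0}\|w(t,\cdot)-N_j\|_\infty \le 2\delta$, and it is exactly this convexity/linearity observation that lets the smoothing preserve the error bound; as written, ``smooth them in $t$'' leaves that to the reader.
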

Notably, in the proof, the dimension of the hidden state $\v$ is used twice: \emph{first}, for having a sufficient number of neurons in \eqref{Eq:SingleHiddenLayer} to approximate a stationary vector field (standard universality property of multilayer perceptron) and,  \emph{second}, for approximating time-dependent vector fields. Therefore, at the cost of introducing a possibly large number of dimensions, the \UpDown\ model is universal in the class of time-dependent NODEs. 
As shown in Appendix \ref{appendix:updown_universal}, this universality result transfers to our shooting formulation. Due to its additional dimensions, it is also likely to be universal in the space of functions (i.e., not necessarily injective). We focus on the \UpDown\ model in our experiments. Note also that while we derived our theory for vector-valued evolutions for simplicity, similar linear in parameter evolution equations can for example be derived for convolutional neural networks.

\section{Experiments}
\label{section:experiments}

Our goal is to demonstrate that it is possible to learn DNNs by optimizing only over the initial conditions of \textit{critical} networks. This is made possible via shooting and efficient via our particle parameterization. A key difference to prior work is that our approach allows to capture time-dependent (i.e., layer-dependent in the discrete setting) parameters {\it without} discretizing these parameters at every time-point. Comparisons to other NODE like methods are not straightforward due to hyper-parameters and different implementations. For consistency, we therefore provide four different formulations (based on the \UpDown\ model of \S\ref{sec:updownNODE}).

\begin{itemize}
\item The \textbf{static direct} model forgoes the Hamiltonian particle ensemble, and instead directly optimizes over \emph{time-constant} parameters: $\theta(t)=\theta$ for all $t$. Everything else, including the \UpDown\ model, stays unchanged. This model is most closely related to NODE~\cite{chen2018neural} and augmented NODE~\cite{dupont2019augmented}.
\item We call our proposed shooting model \textbf{dynamic with particles}. It is parameterized via a set of initial conditions of (position, momentum) pairs, which evolve over time and fully specify $\theta(t)$.
\item The \textbf{static with particles} model is similar to the {\it static direct} model. However, instead of directly optimizing over a \emph{time-constant} $\theta$, it uses a set of (position, momentum) pairs (i.e., particles, as in our {\it dynamic with particles} model above) to parameterize $\theta$ indirectly.
\item 
Finally, we consider the \textbf{dynamic direct} model which uses a piece-wise time-constant $\theta(t)$. It essentially chains together multiple {\it static direct} models and is closely related to a discrete \ResNet\ in the sense that multiple blocks (we use five) are used in succession. However, each block involves time-integrating the \UpDown\ model. While the \textit{dynamic with particles} model captures $\theta(t)$ indirectly via particles and shooting, the \textit{dynamic direct} model requires many more parameters as it represents $\theta(t)$ directly. We show results for the \textit{dynamic direct} model for a subset of the experiments.
\end{itemize}

\begin{figure}[t!]
 \includegraphics[width=0.99\textwidth]{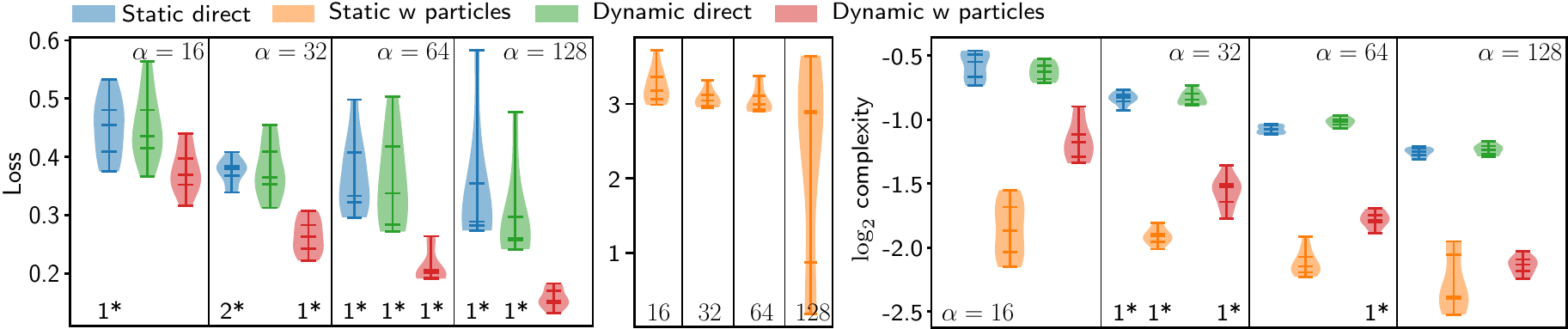}
  \caption{Fit for quadratic-like $y = x^2 + 3/(1+x^2)$ for 10 random initializations. \emph{Left}: Test loss; \emph{Right}: time-integral of $\log_2$ of the Frobenius norm complexity. Lower is better for both measures. * indicates number of removed outliers (outside the interquartile range (IQR) by $\geq1.5\times$ IQR); $\inflation$ denotes the inflation factor. \label{fig:loss_and_complexity_quadratic}}
\end{figure}
All experiments use the \UpDown\ model with quadratic penalty function $R$. Detailed experimental settings, including weights for the quadratic penalty function, can be found in Appendix \ref{appendix:experimental_settings}.

\textbf{Simple 1D function regression.}
We approximate a simple quadratic-like function $y= x^2 + 3/(1+x^2)$ which is non-invertible. We use 15 particles for our experiments. Fig.~\ref{fig:loss_and_complexity_quadratic} shows the test loss and the network complexity, as measured by the log Frobenius norm integrated over time~\cite{neyshabur_exploring_2017}, for the different models as a function of the inflation factor $\inflation$ (cf. \S\ref{sec:updownNODE}). On average, the \textit{dynamic with particles} model shows the best fits with the lowest complexity measures, indicating the simplest network parameterization. Note that the \textit{static with particles} approach results in the lowest complexity measures only because it cannot properly fit the function as indicated by the high test loss. Additional results for a cubic function $y=x^3$ are in Appendix \ref{appendix:figures_results}.

\textbf{Spiral.}
Next, we revisit the spiral ODE example of~\cite{chen2018neural} following the nonlinear dynamics $\dot \x = A\x^3$, $\x \in \mathbb R^2$ (where the power is component-wise). We fix $\x(0)=[2,0]^T$, use $A=[-0.1,2.0;-2,-0.1]$ and evolve the dynamics for time $T=10$. The training data consists of snippets from this trajectory, all of the same length. We use an $L^2$ norm loss (calculated on all intermediate time-points) and 25 particles. Our goal is to show that we can obtain the best fit to the training data due to our dynamic model. Fig.~\ref{fig:spiral_short_long_range} (\emph{top}) shows that we can indeed obtain similar or better fits (lower losses) for a similar number of parameters while achieving the lowest network complexity measures. Fig.~\ref{fig:spiral_short_long_range} (\emph{bottom}) shows the corresponding results for the validation data consisting of the original long trajectory starting from initial value $\x(0)$. Interestingly, by pasting together short-range solutions we are successful in predicting the long-range trajectory despite training on short-range trajectory snippets.
\begin{figure}[t!]
 \includegraphics[width=0.99\textwidth]{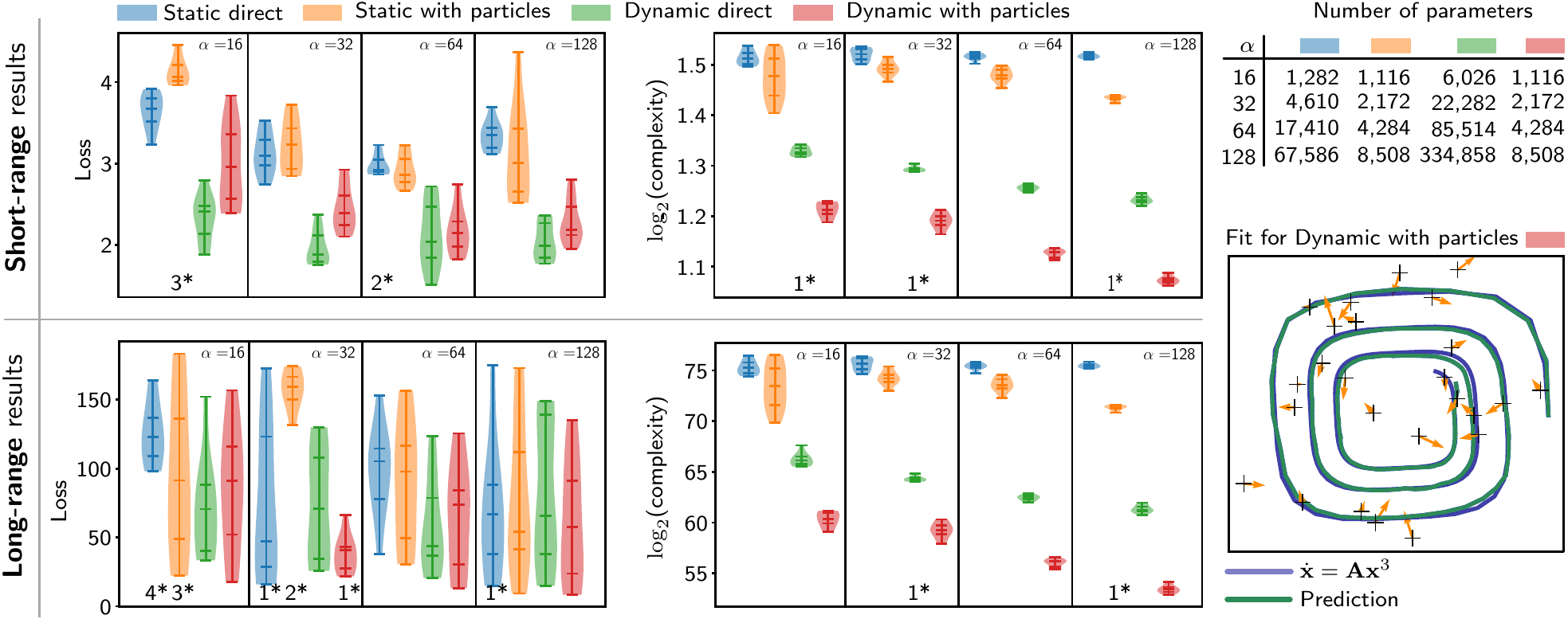}
    \caption{Fit for spiral (short- and long-range). Losses for the different models as well as the time-integral of $\log_2$ of the Frobenius norm complexity measure. Lower is better for both measures. The * symbol indicates how many outliers were removed and $\inflation$ denotes the inflation factor.
  \label{fig:spiral_short_long_range}}
\end{figure}

\textbf{Concentric circles.}
To study the impact of the inflation factor $\inflation$ in a classification regime, we replicate the concentric circles setting of \cite{dupont2019augmented}. 
The task is learning to separate points, sampled from two disjoint annuli in $\mathbb{R}^2$. 
While we are less interested in the learned flow (as in  \cite{dupont2019augmented}), we study how often the proposed \UpDown\ (dynamic with particles) model perfectly fits the training data as a function of $\inflation$.
To the right, we show the success rate over 50 training runs for three choices
\begin{wrapfigure}{r}{4.4cm}
\vspace{-6pt}
\includegraphics[width=4.5cm]{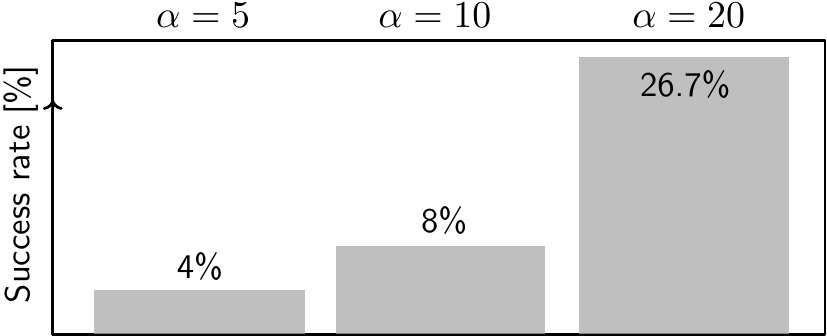}
\vspace{-10pt}
\end{wrapfigure}
of $\inflation$ and 20 particles. \emph{Notably, the effect of $\inflation$ is only visible if the classification loss is down-weighted so that the regularization, $R$, dominates}. 
Otherwise, for the tested $\inflation$, the model always fits the data. The experiment is consistent with~\cite{dupont2019augmented}, where it is shown that increasing the space on which an ODE is solved allows for easy separation of the data and leads to less complex flows. 
The latter is also observed for our model.

\textbf{Rotating MNIST.} Here, we are given sequences of a rotating MNIST digit (along 16 angles, linearly spaced in $[0,2\pi]$). The task is learning to synthesize the digit at any rotation angle, given only the \emph{first} image of a sequence. We replicate the setup of \cite{Yildiz19a} and consider rotated versions of the digit ``3''. We identify each rotation angle as a time point $t_i$ and randomly drop four time points of each sequence during training. One fixed time point is consistently left-out and later evaluated during testing. We use the same convolutional autoencoder  of \cite{Yildiz19a} with the \UpDown\ model operating in the internal representation space after the encoder. 
\begin{wrapfigure}{r}{3.7cm}
	\vspace{-8pt}
	\includegraphics[width=3.7cm]{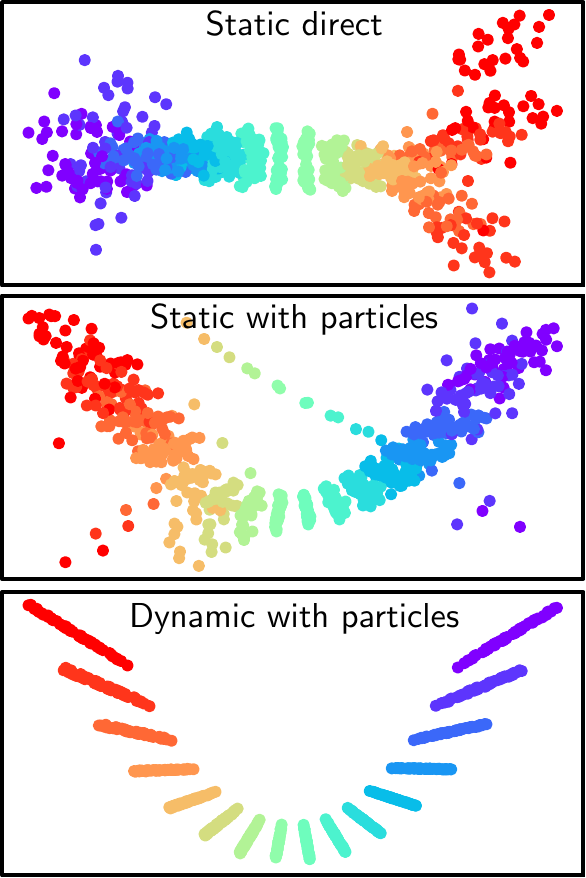}
	\vspace{-8pt}
\end{wrapfigure}
During training, the encoder receives the first
image of a sequence (always at angle $0^\circ$), the \UpDown\ model integrates forward to the desired time points, and the decoder decodes these representations. As loss, we measure the mean-squared-error (MSE) of the decoder outputs. 
Fig.~\ref{fig:mnistresults} lists the MSE (at the left-out angle), averaged over all testing sequences and shows two example sequences with predictions for all time points (100 particles, $\inflation=10$).

While all \UpDown\ variants substantially lower the MSE previously reported in the literature, they exhibit comparable performance. To better understand the differences, we visualize the internal representation space of the autoencoder by projecting all 16 internal representations (i.e., the output of the \UpDown\ models after receiving the output of the encoder) of each testing image onto the two largest principal components, shown to the right (different colors indicate the different rotation angles). This qualitative result shows that allowing for a time-dependent parameterization leads to a more structured latent space of the autoencoder.

\begin{figure*}
\begin{center}
\includegraphics[width=0.99\textwidth]{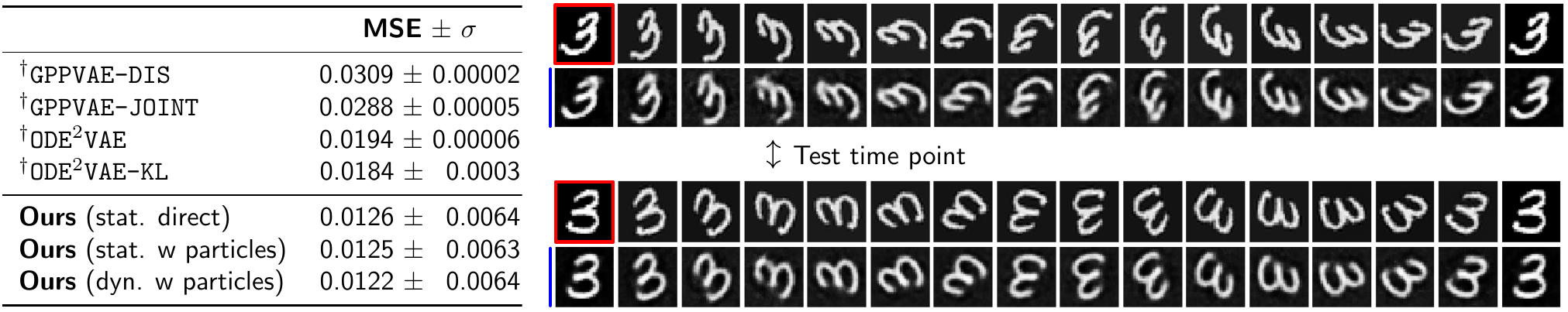}
\end{center}
\caption{\label{fig:mnistresults}\emph{Left}: Image (per-pixel) MSE (measured at the marked time point) averaged over all testing sequences of the rotated MNIST dataset. \emph{Right}: Two testing sequences and predictions (marked \textcolor{blue}{blue}) for all 16 time points when the image at $t=0$ is given as input (marked \textcolor{red}{red}). Results marked with $^\dagger$ are taken from \cite{Yildiz19a}.}
\vspace{-0.25cm}
\end{figure*}

\textbf{Bouncing balls.} 
Finally, we replicate the ``bouncing balls'' experiment
of \cite{Yildiz19a}. This is similar to the rotating MNIST experiment, but the underlying dynamics are more complex. In particular, we are given 10,000 (training) image sequences of bouncing balls at 20 different time points \cite{Sutskever09a}. The task is learning to predict, after seeing the first three images of a sequence, future time points. We use the same convolutional autoencoder of \cite{Yildiz19a} and minimize image (per-pixel) MSE (using all 20 time points for training). Our 
\UpDown~model operates in the internal representation space of the encoder (50-dimensional in our experiments\footnote{We did not further experiment with this hyperparameter, so potentially better results can be obtained.}). In test mode, the network receives the first three image of a sequence and predicts 10 time points ahead. We measure the image (per-pixel) MSE and average the results (per time point) over all 500 testing sequences. For model selection, we rely on the provided validation set. Our \UpDown~(dynamic with particles) model uses 100 particles. Fig.~\ref{fig:bballs_results} (\emph{left}) lists the averaged MSE per time point, plotted against the approaches listed in \cite{Yildiz19a}. Fig.~\ref{fig:bballs_results} (\emph{right}) shows two testing sequences with predictions (the three input time points are not shown). Results for the \UpDown\ static and static with particles model are $\oslash~0.0154$ and $\oslash~0.0150$, respectively.
\begin{figure*}
\begin{center}
\includegraphics[width=0.99\textwidth]{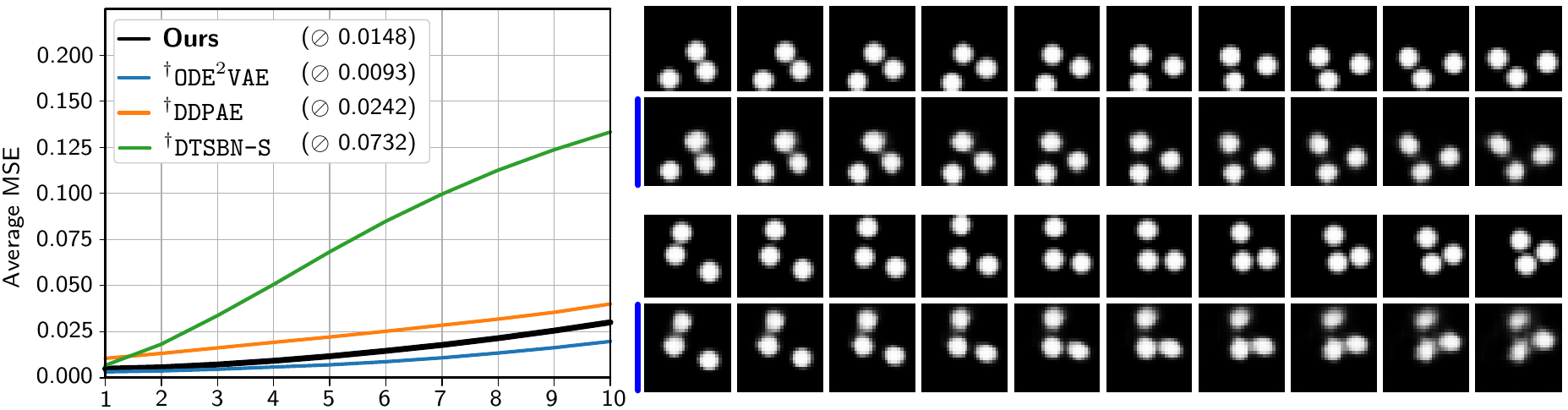}
\end{center}
\caption{\label{fig:bballs_results} \emph{Left}: Image (per-pixel) MSE for predicting 10 time points ahead (after receiving the first three inputs of a sequence), averaged over all testing sequences (numbers in parentheses indicate the MSE when additionally averaged over \emph{all} prediction time points). Results marked with $^\dagger$ are taken from \cite{Yildiz19a}. \emph{Right}: Two testing sequences with predictions (marked \textcolor{blue}{blue}).}
\end{figure*}

\textbf{Computational cost.}
The computational cost of the \UpDown\ model consists in storing the particles and running forward the model for the collection of particles and the data. Hence, computational cost scales linearly in the number of particles. To get rid of this linear relationship (in case only a forward pass is needed), the ODE can be discretized in time and the \texttt{ResNet} with its weights is obtained.

\vspace{-0.2cm}
\section{Discussion and Conclusions}

We demonstrated that it is possible to parameterize DNNs via initial conditions of (position, momentum) pairs. While our experiments are admittedly still simple, results are encouraging as they show that 1) the particle-based approach can achieve competitive performance over direct parameterizations and that 2) time-dependent parameterizations are useful for obtaining simpler networks and can be realized with significantly fewer parameters using particle-based shooting. 

Our work opens up many different follow-up questions and formulations. For example, we presented our approach for a model with continuous dynamics, but the particle and the shooting formalism can also be applied to discrete-time models. Further, we focused, for simplicity, on continuous variants of multi-layer perceptrons, but similar linear-in-parameter models can be formulated for convolutional neural networks. 
Models that are nonlinear in their parameters hold the promise for connections with optimal mass transport theory and to theoretical complexity results, which we touched upon for our \UpDown\ model. Indeed, this change of paradigm in the parameterization may result in new quantitative results on network generalization properties. Lastly, how well the approach generalizes to more complex problems, how many particles are needed to switch from a standard deep network to its shooting formulation, and how optimizing over critical points of the original optimization problem via shooting relates to network generalization will be fascinating to explore.

\textbf{Source code} is available at: \href{https://github.com/uncbiag/neuro_shooting}{\texttt{https://github.com/uncbiag/neuro\_shooting}}

\vspace{-0.2cm}
\section*{Broader Impact}
One goal of this work is to enrich the understanding of continuous depth neural networks and to open a different (or alternative) perspective on its parameterization.
Specifically, we shift the parameterization of deep neural networks from a layer-by-layer perspective to an initial-value perspective and Hamiltonian dynamics. At this point, our work is conceptual and theoretical in nature; broader impact emerges most likely as a consequence of better understanding the role of neural network parameterizations.




\vspace{-0.2cm}
\begin{ack}
This research project was initiated during a one-month invitation of M. Niethammer by the Labex Bézout, supported by  the French National Research Agency ANR-10-LABX-58.
R. Kwitt is partially funded by the Austrian Science Fund (FWF): project FWF P31799-N38 and the Land Salzburg (WISS 2025) under project numbers 20102-F1901166-KZP and 20204-WISS/225/197-2019. S. Wei is the recipient of an Australian Research Council Discovery Early Career Award (project number DE200101253) funded by the Australian Government.


\end{ack}

\bibliography{biblio}
\bibliographystyle{abbrv}

\newpage

\appendix

\section*{Supplementary material}

The following sections discuss in more detail the theoretical guarantees of our approach.  \S\ref{appendix:SecExpectationApproximation} presents the optimality conditions underlying our shooting formulation and it is shown how these optimality equations  can be approximated via a collection of particles. \S\ref{appendix:Barron} proposes different regularizations, whose choice is key for practical and theoretical results. We show that, under some conditions, the Rademacher complexity of the set of flows can be bounded and apply our results in \S\ref{SecAppendixConsequencesUpDown} to the {\UpDown} model. \S\ref{appendix:freeparams} discusses the number of free parameters of our shooting approach in relation to the number of free parameters for direct optimization. \S\ref{appendix:automatic_shooting} explains how the shooting equations can be automatically derived via automatic differentiation.  \S\ref{appendix:updown_universal} shows the universality of our {\UpDown} model. \S\ref{appendix:experimental_settings} provides details on our experimental setup. Lastly, \S\ref{appendix:figures_results} shows some additional experimental results.

\section{Expectation approximation of optimality equations}
\label{appendix:SecExpectationApproximation}

We first discuss a general variational setup of supervised learning including regularization.

\subsection{Variational setup}
Suppose the data consists of input $X \in \R^d$. 
Let $f(\cdot,\theta(t))$ be a vector field on $\mathbb \R^d$, e.g. the single hidden layer of \eqref{Eq:SingleHiddenLayer} or a linear (in parameter) layer. Consider the flow $\varphi := \varphi(T,\cdot)$ generated by $f$ according to
\begin{equation}\label{EqDiffeomorphicFlows}
\begin{cases}
\frac{\ud}{\ud t} \varphi(t,\x) = f(\varphi(t,\x),\theta(t))\,,\\
\varphi(0,\x) = \x\,.
\end{cases}
\end{equation}

We consider the general task of minimizing,
\begin{equation}
\on{Reg}(\varphi) + \gamma \mathbb{E}[\ell(\varphi(X))]\,,
\label{eq:variational_task}
\end{equation}
where $\gamma$ is a positive regularization parameter. 
\par
We now consider the particular case of a \ResNet\ model where each layer is given by an \UpDown\ model \eqref{updownODE} or even a single hidden layer \eqref{Eq:SingleHiddenLayer}.

\par
Without loss of generality, set the terminal time to $T=1$.
Letting $\rho_0$ denote the probability density of $X$, minimizing \eqref{eq:variational_task} is equivalent to minimizing
\[
\inf_{\varphi}\left[ \on{Reg}(\varphi) + \gamma \int_{\R^d} \ell(\varphi(\x)) \rho_0(\x) \ud \x\ \right]\enspace.
\]
This can be rewritten as 
\[
\inf_{\varphi} \left[\on{Reg}(\varphi) + \gamma  \int_{\R^d} \ell(\x') \rho_1(\x') \ud \x'\right]\enspace,
\]
where $\rho_1(\x):=\rho(1,\x)$ is the flow of the continuity equation
\[
\partial_t \rho(t,\x) + \on{div}(\rho(t,\x) f(\x,\theta)) = 0\,,\rho(0,\x)=\rho_0(\x)\,,
\]
where $\on{div}$ is the divergence operator on vector fields. Note that $\rho_1$ can be regarded as the density representing the data at time $1$.
In the following, we deal with a general regularization term $\on{Reg}(\varphi) = \int_0^1 R(\theta(t),\rho(t)) \ud t$, where the $R$ term can depend on the density of data at time $t$. A particular though important case is when the regularization $R$ does not depend on $\rho$, 
$\int_0^1 R(\theta(t)) \ud t$.

\subsection{Optimality equations and Hamiltonian ensemble approximation}
We detail the optimality equations when data points are represented by a probability measure.
As mentioned above, the regularity of the map is enforced via a penalty on the weights at each timepoint and is  the integral $\int_0^1 R(\theta(t)) \ud t$ or even more generally $\int_0^1 R(\theta(t),\rho(t)) \ud t$.
Using Lagrange multipliers, this constraint can be enforced and minimizers of the energy should be saddlepoints of the energy
\begin{multline*}
\mathcal{L}(\rho,\theta,p) := \gamma \int_{\R^d} \ell(\x) \rho_1(\x) \ud \x+ \int_{0}^1 R(\theta(t),\rho(t))\ud t \\ + \int_0^1 \int_{\R^d}  p(t,\x) (\partial_t \rho(t,\x) + \on{div}(\rho(t,\x) f(\x,\theta(t))))  \ud \x \ud t\,,
\end{multline*}
where $p(t,\x)$ is a time and space dependent function. 
The optimality equations are then
\begin{equation}\label{EqOptimalityEquations_density}
\begin{cases}
\partial_t \rho(t,\x) + \on{div}(\rho(t,\x) f(t,\x,\theta(t))) = 0\,,\\
\partial_t p(t,\x) + \nabla p(t,\x) \cdot f(t,\x,\theta(t)) = \frac{\delta R}{\delta \rho}(\theta(t),\rho(t))\,,\\
 \partial_{\theta}R
 (\theta(t),\rho(t)) - \int_{\R^d} 
 \partial_{\theta}f(\x,\theta(t))^\top(\nabla p(t,\x) \rho(t,\x)) = 0\,,
\end{cases}
\end{equation}
where $\nabla p$ is the gradient w.r.t. $\x$ of $p(t,\x)$ and $\delta$ denotes differentiation w.r.t. the indicated parameter. The notation $\frac{\delta R}{\delta \rho}$ means the Fréchet derivative of the penalty w.r.t. the density $\rho$. Note that in our current work, $R$ is independent of $\rho$. However, this more general setup encompasses optimal transport models, see Section \ref{SecNonlinearInParameters}.

In practice, one does not have access to the full distribution and the variational setup needs to be approximated. As proposed in the main text, we approximate it using a collection of particles that follow the optimality equations which are Hamiltonian evolution equations for this collection of particles. The collection of particles $\{(\q_j,\p_j)\}$ are defined by their state and costate. We estimate $\rho$ using the empirical measure $\frac{1}{K}  \sum_{j = 1}^K \delta_{\q_j(t)}(\cdot)$. Writing the optimality equation for this particular empirical measure leads to the equation \eqref{EqShootingForward}.
When the number of particles tends to infinity, we can hope to recover the optimal trajectory.
However, we do not explore this question formally here. We simply remark that this question is directly connected to expressiveness and generalization properties of the constructed neural network and is also probably data dependent.

\section{Choice of regularization}
\label{appendix:Barron}
The simplest regularization on the flow $\varphi$ is given by
\begin{equation}
	\on{Reg}(\varphi) = \int_0^1 R(\theta(t)) \ud t\,,
\end{equation}
where $R$ does not depend on $\rho(t,\cdot)$.
The first possibility is a quadratic penalty for the single-hidden-layer vector field of \eqref{Eq:SingleHiddenLayer}, where $R(\theta(t)) = \frac 12 \| \theta(t) \|^2_2$ is the Frobenius norm of the parameter $\theta$.
Since the space of vector fields is a \emph{finite} dimensional linear space, it can be endowed with a scalar product, which turns this space into a Reproducing Kernel Hilbert Space (RKHS). Therefore, the linear in parameter - quadratic penalty setting of \S\ref{SecChoiceOfReg} is a particular case of vector fields encoded by $f(\cdot,\theta(t))\in H$, with $H$ a RKHS embedded in $W^{1,\infty}$ vector fields. 
This setting
leverages strong analytical and geometrical foundations \cite{laurentbook,CompletenessDiffeomorphismGroup}:

1) When the activation function is smooth, the resulting vector field is smooth\footnote{I.e smoothness asks for Lipschitz regularity vector field, which ensures existence and uniqueness of the flow.}, and consequently the associated flow map $\varphi$ is guaranteed to be a one-to-one smooth map (i.e., a diffeomorphism). For instance, with the \UpDown\ model, it is a homeomorphism in $(\x,\v)$.
Moreover, the quadratic penalty induces a right-invariant distance on the set of flows generated by \eqref{EqDiffeomorphicFlows} and the distance to identity of the resulting flow can be bounded by  $\on{Reg}(\varphi)$ (see~\cite{laurentbook,CompletenessDiffeomorphismGroup} for more details in a Sobolev setting). 2) When the activation function is of {\ReLU} type, the resulting map is still a $W^{1,\infty}$ one-to-one map (i.e., a homeomorphism) and has Lipschitz regularity.
\par
Another type of regularization for the single-hidden-layer vector field of \eqref{Eq:SingleHiddenLayer} we discuss is based on the Barron norm \cite{e2019barron}:
\begin{equation*}
\| \theta \|_{\mathcal{B}}^2 :=  \frac 1{d'}  \sum_{j = 1}^{d'}\| \theta_1^j \|_2^2 (\| [\theta_2]_j \|_1 + \| b_2^j \|_1)^2\enspace,
\end{equation*}
where $\theta_1^j$ denotes the $j^{\text{th}}$ column of $\theta_1$ and $[\theta_2]_j$ denotes the $j^{\text{th}}$ row of $\theta_2$.
As discussed in the main text, the reason we might consider a Barron norm penalty for the single-hidden-layer vector field in \eqref{Eq:SingleHiddenLayer} rather than the quadratic penalty is because of its theoretical results. Indeed, the Rademacher complexity is bounded
for the combination of a single-hidden-layer vector field with a Barron norm penalty, but not when combined with a quadratic penalty.

\subsection{Linear in parameters - quadratic energy}
\label{appendix:quadratic_energy}
Now let us examine in detail models that are \emph{linear} in parameters and have \emph{quadratic} energy on parameters: this case is the simplest to be studied, and computationally not as demanding as the nonlinear case. As mentioned above, the set of possible vector fields $f(\cdot,\theta(t))$ is a finite dimensional linear space, which is a reproducing kernel Hilbert space when endowed with an $L^2$ norm. Since all Hilbert norms in finite dimensions are equivalent, this choice of regularization is universal in this class of quadratic penalties.
\begin{enumerate}
	\item The vector field is $f(\cdot,\theta(t)) = \theta \cdot \mathbf{\sigma}$, where $\mathbf{\sigma}$ is a vector of maps.
	In this case, the optimality equation reads 
	\begin{equation*}
	\partial_{\theta}f(\x,\theta(t))^\top
	(\nabla p(t,\x) \rho(t,\x)) = \int_{\R^d} \mathbf{\sigma}(x)^\top(\nabla p(t,\x)  \rho(t,\x))\ud \x \,.
	\end{equation*}
	\item If the penalty $R$ only depends on $\theta$ and is quadratic: $R(\theta(t)) = \frac{1}{2}\int_0^1 \| \theta(t) \|^2 \ud t$, then one has $\frac{\delta R}{\delta \theta}(\theta(t),\rho(t)) = \theta(t)$.
\end{enumerate}
Thus, under these two conditions, the parameters are \emph{explicit} in terms of $p$, $\rho$ and $\sigma$:
\begin{equation}\label{EqOptimalityEquationLinearInParameter}
\theta(t) =  \int_{\R^d} \sigma(\x)^\top(\nabla p(t,\x) \rho(t,\x))\ud \x\,.
\end{equation}

Two observations are warranted. First, if, instead of quadratic regularization on the parameters, we were to choose a RKHS norm (in the infinite dimensional case) as penalty, it would result in the introduction of the kernel applied to the R.H.S. of \eqref{EqOptimalityEquationLinearInParameter}.
Second, from \eqref{EqOptimalityEquationLinearInParameter}, one could be tempted to derive an evolution equation for $\theta$. This equation is known as the EPDiff equation \cite{laurentbook} and is unfortunately not a closed equation on the set of parameters $\theta(t)$ themselves. Therefore, our approach is a possible way to approximate it.

An important property of this simple setting is that the norm of the vector field is preserved by the forward model defined by the collection of Hamiltonian particles and it also holds in the continuous setting. As stated in Section \ref{SecChoiceOfReg}, the Hamiltonian is given by $R(\theta(t)) = \frac{1}{2} \Tr\left(A(t)^\top M_{A} A(t)\right) + \frac{1}{2} b(t)^\top M_{b} b(t)$ where $A,b$ are the optimal parameters given by
\begin{equation}
\begin{cases}
A(t) \! &= {M_{A}}^{-1}(-\sum_{j=1}^K \p_j(t) \sigma(\q_j(t))^\top)\\
b(t) \! &= {M_{b}}^{-1}(-\sum_{j=1}^K \p_j(t))\,.
\end{cases}
\end{equation}
The Hamiltonian $R(\theta(t))$ being constant gives a constant norm vector field.

\subsection{Nonlinear in parameters - energy which depends on the distribution}\label{SecNonlinearInParameters}
For exposition purposes, we present two cases of interest which we have not well explored numerically.

\textbf{Example of the Barron norm.}
Obviously, the single-hidden-layer vector field in \eqref{Eq:SingleHiddenLayer} is not linear in parameters. We have already discussed that it is proper in this case to endow the space with norms such as the Barron norm \cite{e2019barron}.
For simplicity, consider the single-hidden-layer vector field in \eqref{Eq:SingleHiddenLayer} without $b_1$, i.e., $f(\x(t),\theta(t)) = \theta_1 \sigma(\theta_2(\x) + b_2)$. A simple upper bound for the Barron norm\footnote{The actual Barron norm is defined as the infimum of the r.h.s. in \eqref{EqBarronNormNotOptimized} on all the possible representations of the function $f(\cdot,\theta)$ as a single-hidden-layer.} is
\begin{equation}\label{EqBarronNormNotOptimized}
\| f(\cdot,\theta)  \|_{\mathcal{B}}^2 :=  \frac 1{d'}  \sum_{j = 1}^{d'}\| \theta_1^j \|_2^2 (\| [\theta_2]_j \|_1 + \|[b_2]_j\|_{1})^2\enspace.
\end{equation}

Again, $\theta_1^j$ denotes the $j^{\text{th}}$ column of $\theta_1$ and $[\theta_2]_j$ denotes the $j^{\text{th}}$ row of $\theta_2$.

Let us consider the case of $R(\theta(t)) = \frac 1 2 \| f(\cdot,\theta)  \|_{\mathcal{B}}^2$. 
In this case, one has the following optimality equations to solve
\begin{align*}
& \theta_1^j (\| [\theta_2]_j \|_1 + \| |[b_2]_j \|_1)^2 =  \int_{\R^d} \sigma([\theta_2]_j\x + [b_2]_j)^\top(\nabla p(t,\x) \rho(t,\x))\ud \x\,,\\
& \| \theta_1^j \|^2_2  (\| [\theta_2]_j \|_1 + \| [b_2]_j\|_1) \partial \| [\theta_2]_j^k \|_1 = \int_{\R^d} 
[\ud \sigma([\theta_2]_j\x + [b_2]_j)(\x_k)]^\top(\nabla p(t,x) \rho(t,\x))\ud \x \,,\\
& \| \theta_1^j \|^2_2  (\| [\theta_2]_j \|_1 + \| [b_2]_j \|_1) \partial \|  [b_2]_k^j \|_1 = \int_{\R^d} 
[\ud \sigma([\theta_2]_j\x + [b_2]_j)(\x_k)]^\top(\nabla p(t,\x) \rho(t,\x))\ud \x \,.
\end{align*}

These equations involve the subdifferential of the $L^1$ norm, and optimization of this type of functions, which involves sparsity, is a well-explored field~\cite{MAL-015}.
We leave experiments with this norm for future work. Note that in this case the norm of the vector field is not equal to the Hamiltonian and it is not a constant of the flow.
\subsection{$L^2$ regularization, optimal transport}
Last, we briefly mention a model that is part of our framework which has the advantage of not specifying the penalty on the space of parameters encoding the vector field. In case there is no obvious norm to be used on the space of vector fields, it is possible to use an $L^2$ type of penalty on the vector fields themselves \mnrev{instead of on the parameters.}

Indeed, one way to be rather independent of the choice of the parameterization of the map consists in introducing a cost that represents the $L^2$ norm of the map. However, $L^2$ depends on the choice of a measure and this measure can be  chosen as the density of the data, $\rho(t,\x)$. More precisely, one can use 
\begin{equation}\label{EqOTLike}
R(f,\rho(t))  = \frac 12 \int_{\R^d} \| f(\x,\theta)\|^2 \rho(t,\x) \ud \x\,.
\end{equation} 
In such a case, this formulation resembles finding an optimal transport (OT) map between $\rho_0$ and $\rho_1$. Specifically,
optimal transport is an optimization problem  which can be solved via a fluid dynamic formulation \cite{benamou2000computational} introducing the kinetic penalty above. However, the two models (OT and the one defined by the regularization \eqref{EqOTLike}) differ since the optimization set for optimal transport is the set of $L^2$ vector fields with respect to measure $\rho$ and the above formulation is a parameterized approximation of this set.

This parameterized approximation needs to retain generalization properties of the optimized map. Note however, that in the limit where the number of neurons goes to infinity, optimal transport will be well-approximated since the optimization is performed on a dense subset of all vector fields. Obviously, fixing the choice to a single-hidden-layer design implies a choice for $d'$ in $\theta_1(t) \in L(\R^{d'},\R^d)$ and $\theta_2(t) \in L(\R^d,\R^{d'})$ of ~\eqref{Eq:SingleHiddenLayer}, which thus gives a regularization of the computed approximation of the optimal transport map.


\textbf{Computational burden.}
In either case of the Barron norm or the optimal transport type of penalty, the implicit equation corresponding to the third equation in \eqref{EqOptimalityEquations_density} has to be solved at each layer of the discretization. 
We experimented with a simple strategy of unrolling the related minimization scheme. An efficient approach to solve such implicit equations will be necessary for practical implementations.

\subsection{Rademacher complexity of bounded energy flows. }\label{SecRademacher}
In this section, given a set of vector fields with bounded Rademacher complexity, we show that the resulting flows also have bounded Rademacher complexity.
The flow of a vector field $f(\cdot,\theta(t))$ is a vector valued map denoted by $\varphi$. 
Let us first treat the case of the Rademacher complexity of a component of the flow map $\varphi^k$.
\begin{theorem}\label{ThRademacher}
	Let $\mathcal{F}$ be a space of vector fields defined on a compact space $C \subset \R^d$. Assume that the Rademacher complexity on $n$ points in $C$ of each component of the vector fields $f^k(t,\cdot)$ for $k = 1,\ldots,d$ is controlled by $M(n,t)$ which depends on $n$, then the Rademacher complexity of each component of the flows at time $1$ is bounded by $\int_0^1 M(n,t) \ud t$. 
\end{theorem}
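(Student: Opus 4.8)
The plan is to exploit the integral representation of the flow and thereby reduce the Rademacher complexity of the time-$1$ flow map to a time-integral of the Rademacher complexities of the underlying vector fields. Fix a sample $\x_1,\dots,\x_n\in C$ and i.i.d.\ Rademacher signs $\varepsilon_1,\dots,\varepsilon_n\in\{\pm1\}$, and write $\varphi^f$ for the flow generated by $f\in\mathcal F$ via \eqref{EqDiffeomorphicFlows}. By the fundamental theorem of calculus,
\[
\varphi^k(1,\x_i) = \x_i^k + \int_0^1 f^k(\varphi^f(s,\x_i),\theta(s))\,\ud s\,,
\]
where the term $\x_i^k$ is independent of $f$, so its Rademacher average vanishes, $\E_\varepsilon \frac1n\sum_i \varepsilon_i \x_i^k = 0$. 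Consequently the empirical Rademacher complexity of the $k$-th component of the flow equals
\[
\hat{\mathcal R}_n(\varphi^k) = \E_\varepsilon \sup_{f\in\mathcal F} \frac1n\sum_{i=1}^n \varepsilon_i \int_0^1 f^k(\varphi^f(s,\x_i),\theta(s))\,\ud s\,.
\]

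Next I would interchange the finite sum with the time integral, pull the supremum inside using $\sup_f \int_0^1(\cdot)\,\ud s \le \int_0^1 \sup_f(\cdot)\,\ud s$, and use that $\E_\varepsilon$ is a finite average to exchange it with $\int_0^1\ud s$, obtaining
\[
\hat{\mathcal R}_n(\varphi^k) \le \int_0^1 \E_\varepsilon \sup_{f\in\mathcal F} \frac1n\sum_{i=1}^n \varepsilon_i f^k(\varphi^f(s,\x_i),\theta(s))\,\ud s\,.
\]
It then suffices to bound the integrand by $M(n,s)$ for each fixed $s$, after which integrating in $s$ gives the claim.

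The main obstacle is that the evaluation points $\varphi^f(s,\x_i)$ depend on the very vector field $f$ over which the supremum is taken, so the integrand is not literally the Rademacher complexity of $\{f^k(s,\cdot)\}$ on a \emph{fixed} sample. I would resolve this by decoupling through an over-approximation: for each fixed $f$ the configuration $(\varphi^f(s,\x_i))_{i=1}^n$ lies in $C^n$ — here compactness of $C$ together with the invariance of $C$ under the flow (the homeomorphism property recalled in \S\ref{appendix:Barron}) is used to keep trajectories inside $C$ — so that
\[
\sup_{f\in\mathcal F}\frac1n\sum_i \varepsilon_i f^k(\varphi^f(s,\x_i),\theta(s)) \le \sup_{\mathbf z\in C^n}\ \sup_{g\in\mathcal F}\frac1n\sum_i \varepsilon_i g^k(z_i,\theta(s))\,.
\]
Taking $\E_\varepsilon$, the right-hand side is exactly the worst-case (over $n$-point configurations in $C$) empirical Rademacher complexity of the $k$-th component of $\mathcal F$ at time $s$, which by hypothesis is controlled by $M(n,s)$; substituting into the displayed bound yields $\hat{\mathcal R}_n(\varphi^k)\le\int_0^1 M(n,s)\,\ud s$.

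I expect the delicate points to be twofold: (i) pinning down the precise meaning of the hypothesis as a configuration-\emph{uniform} Rademacher bound (the supremum over configurations taken \emph{inside} $\E_\varepsilon$), since this is exactly what legitimizes the decoupling step rather than merely a pointwise-in-sample bound; and (ii) the routine but necessary measurability/attainment checks that justify interchanging $\sup_f$ with the time integral and with $\E_\varepsilon$. The first is the substantive one, as it is where the flow's self-referential dependence on $f$ is absorbed into a worst-case over evaluation points.
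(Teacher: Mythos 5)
Your proof is correct and follows essentially the same route as the paper's: write the flow as identity plus the time-integral of the vector field along trajectories, drop the identity term (its Rademacher average is zero), exchange the supremum and the expectation with the time integral, and invoke the hypothesis at each time $s$. The only difference is that you make explicit the decoupling of the $f$-dependent evaluation points $\varphi^f(s,\x_i)$ through a supremum over configurations in $C^n$ (and the resulting configuration-uniform, sup-inside-$\E_\varepsilon$ reading of the hypothesis, together with the flow keeping $C$ invariant), a step the paper's proof performs silently when it bounds the integrand by $M(n,t)$.
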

\begin{proof}
Recall that Rademacher complexity, see \cite{wainwright2019high}, of a class of functions $\mathcal{F}$ is defined as, for $\mathbf{Z} = (\z_1,\ldots,\z_n) \in C$,
\begin{equation*}
\on{Rad}_{\mathbf{Z}}(\mathcal{F})\eqdef \mathbb{E}\left[\sup_{g \in \mathcal{F}} \sum_{i = 1}^n \varepsilon_i g(\z_i)\right]\,,
\end{equation*}
where the $\{\varepsilon_i\}_{i = 1}^n$ are i.i.d. Rademacher random variables. Our hypothesis ensures
$\on{Rad}_{\mathbf{Z}}(\mathcal{F}) \leq M(n)$. Apply the definition of the flow to get 
\begin{equation*}
\varphi(1,\x) = \x + \int_0^1 f(\varphi(t,\x),\theta(t)) \ud t\,.
\end{equation*}
Therefore, 
\begin{align*}
\mathbb{E}\left[\sup_{\varphi \in \mathcal{F}} \sum_{i = 1}^n \varepsilon_i \varphi^k(\z_i)\right] &\leq \on{Rad}_{\mathbf{Z}}(\{\on{Id}\}) + \int_0^1 \mathbb{E}\left[\sup_{f(\cdot,\theta(t))}\sum_{i = 1}^n \varepsilon_i f^k(\varphi(t,\z_i),\theta(t))\right] \ud t\,,\\
&\leq  0 + \int_0^1 M(n,t) \ud t \,.
\end{align*}
In the previous formula, we used the fact that the Rademacher complexity of a set comprised of a single map is zero.
\end{proof}

\begin{corollary}
Let $H$ be a RKHS of vector fields whose kernel $\mathsf{k}$ is bounded on the diagonal $\| \mathsf{k}(\x,\x)^k \|_\infty < \infty$ , then, the set of flows denoted by $\mathcal{F}$ at time $1$ of time-dependent vector fields in $\mathcal B(0,R)$, the ball of radius $R$ centered at the origin satisfies $\on{Rad}_{\mathbf{Z}}(\mathcal{F}) \leq \frac{2R \sqrt{\| \mathsf{k}(\x,\x)^k \|_\infty}}{\sqrt{n}}$, where $\on{Rad}_{\mathbf{Z}}(\mathcal{F})$ is the Rademacher complexity for $n$ points.
\end{corollary}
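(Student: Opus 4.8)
The plan is to apply Theorem~\ref{ThRademacher}, which reduces the claim to controlling the per-time Rademacher complexity $M(n,t)$ of each component $f^k(t,\cdot)$ of the admissible time-dependent vector fields. Since every time slice lies in the ball $\mathcal B(0,R) \subset H$, it suffices to establish the standard estimate for the Rademacher complexity of a ball in an RKHS, to verify that this estimate is uniform in $t$, and then to integrate the resulting constant bound over $t \in [0,1]$. So the whole argument is: (i) the RKHS-ball bound, and (ii) plugging it into the flow theorem.

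For step (i), fix $t$ and evaluation points $\mathbf Z = (\z_1,\dots,\z_n) \in C$, and use the reproducing property $f^k(\z_i) = \langle f^k, \mathsf k(\cdot,\z_i)\rangle_H$. For the class $\mathcal F_t = \{ f^k : \|f\|_H \le R\}$ this gives
\begin{equation*}
\sup_{\|f\|_H \le R} \sum_{i=1}^n \varepsilon_i f^k(\z_i)
= \sup_{\|f\|_H \le R} \Big\langle f^k, \sum_{i=1}^n \varepsilon_i \mathsf k(\cdot,\z_i)\Big\rangle_H
= R \Big\| \sum_{i=1}^n \varepsilon_i \mathsf k(\cdot,\z_i)\Big\|_H .
\end{equation*}
Taking the expectation over the Rademacher variables and applying Jensen's inequality ($\mathbb E\|\cdot\| \le \sqrt{\mathbb E\|\cdot\|^2}$) together with $\mathbb E[\varepsilon_i \varepsilon_j] = \delta_{ij}$ yields $\mathbb E\,\| \sum_i \varepsilon_i \mathsf k(\cdot,\z_i)\|_H \le \sqrt{\sum_i \mathsf k(\z_i,\z_i)^k} \le \sqrt{n\,\|\mathsf k(\x,\x)^k\|_\infty}$. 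With the averaged (i.e. $1/n$-normalized) Rademacher complexity used in the corollary statement, this produces the per-time bound $M(n,t) \le R\sqrt{\|\mathsf k(\x,\x)^k\|_\infty}/\sqrt n$, which is independent of $t$ because every time slice sits in the same ball of radius $R$.

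Finally I would feed this into Theorem~\ref{ThRademacher}. The key point is that in the theorem's proof the vector field is evaluated at the flowed samples $\varphi(t,\z_i)$, which remain in the compact domain $C$; hence the diagonal bound $\mathsf k(\varphi(t,\z_i),\varphi(t,\z_i))^k \le \|\mathsf k(\x,\x)^k\|_\infty$ still holds and the estimate of step (i) is valid along the entire trajectory. Integrating the constant bound over $[0,1]$ then gives $\int_0^1 M(n,t)\,\ud t \le R\sqrt{\|\mathsf k(\x,\x)^k\|_\infty}/\sqrt n$, and the factor of $2$ in the stated corollary is the usual conservative constant carried by the standard RKHS-ball Rademacher estimate. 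The main obstacle I anticipate is purely bookkeeping: keeping the normalization convention consistent between the unnormalized definition used in Theorem~\ref{ThRademacher} and the $1/\sqrt n$ scaling asserted in the corollary, and handling the vector-valued (operator-valued kernel) structure carefully so that the per-component reproducing identity and the diagonal bound $\|\mathsf k(\x,\x)^k\|_\infty$ are applied consistently across the $d$ components and along the flow.
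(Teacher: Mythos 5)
Your proposal is correct and takes essentially the same route as the paper: the paper simply cites the standard RKHS-ball Rademacher bound (Lemma~22 of Bartlett--Mendelson), whose proof is exactly your reproducing-property/Cauchy--Schwarz/Jensen computation, and then applies Theorem~\ref{ThRademacher} with $M(n,t)$ constant in $t$. The factor $2$ and the $1/\sqrt{n}$ scaling you flag are indeed just the normalization convention inherited from that cited lemma, a bookkeeping point the paper itself glosses over.
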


\begin{proof}
The Rademacher complexity of the ball of radius $R$ in the RKHS $H$ \cite[Lemma~22]{bartlett} is upper bounded: $\on{Rad}_{\mathbf{Z}}(\mathcal{B}(0,R)) \leq \frac{2R \sqrt{\| \mathsf{k}(\x,\x)^k \|_\infty}}{\sqrt{n}}$. We then directly apply Theorem \ref{ThRademacher}.
\end{proof}
A similar result also holds for vector fields generated by the single-hidden-layer vector field in \eqref{Eq:SingleHiddenLayer}, see \cite{e2019barron}.
Last, we note that the result and its proof also hold if one uses the following Rademacher complexity for vector valued functions  \cite{RademacherVectorValued},
\begin{equation*}
\on{Rad}_{\mathbf{Z}}(\mathcal{F})\eqdef \mathbb{E}\left[\sup_{g \in \mathcal{F}} \sum_{i = 1}^n\sum_{j =1}^d \varepsilon_j \varepsilon_i g_j(\z_i)\right]\,,
\end{equation*}
for $g = (g_j)_{j = 1,\ldots,d} \in \mathcal{F}$.

\subsection{Consequences for the  \texttt{UpDown} model}\label{SecAppendixConsequencesUpDown}

We put together the previous results on the \texttt{UpDown} model. First, the space of vector fields endowed with the quadratic penalty on the parameters forms a RKHS. 
The variational formulation implies that the norm of the velocity field generated by a given collection of Hamiltonian particles $\{(\q_j(0),\p_j(0))\}$ is preserved. In addition, this norm can be explicitly computed since the parameters at time $0$ can be computed in terms of $\{(\q_j(0),\p_j(0))\}$. Last, the generated space of maps has a Rademacher complexity which is linearly bounded by this norm. In order to be fully explicit on the constant for the Rademacher complexity, we need to compute $\sup_{x \in C} \| \mathsf{k}(x,x) \|$ where $\mathsf{k}$ is the kernel associated with the RKHS. Without making this quantity explicit here, we simply mention that the bound degrades (i.e. increases) with increasing inflation factor $\alpha$, as it can be expected.

\section{Analysis of the number of free parameters}
\label{appendix:freeparams}

It is instructive to understand the number of parameters for a shooting approach in comparison to the typical approach of optimizing a neural network (where the parameter-dependency at optimality is only considered implicitly at convergence of the numerical solution rather than explicitly during the shooting). We focus on the cases of affine and convolutional layers for illustration.

Consider a DNN with a depth of $L$ layers, where each hidden layer has $P$ parameters. The number of free parameters is then $LP$, compared to $2KS$ where $K$ is the number of active particles, each of them of size $2S$\footnote{For example, $S$ for our {\UpDown} model simply corresponds to the dimension of its state space: $S=(\alpha+1)d$, where $\alpha$ is the inflation factor and $d$ the data dimension. Note that in our experiments with the {\UpDown} model we also learned an affine map from the initial conditions $\x(0)$ to the initial conditions $\v(0)$. Such a map has $\alpha d(d+1)$ parameters. These parameters are included in the table of Fig.~\ref{fig:spiral_short_long_range} and in Tables~\ref{tab:number_of_parameters_simple_functions}/\ref{tab:number_of_parameters_spiral} summarizing the number of mode parameters. However, we will not consider parameters in our discussion here, as they would equally apply to both a shooting and a direct optimization approach and could also be avoided by simply initializing $\v(0)$ to zero. A similar initialization to zero approach is, for example, commonly taken in {\ResNets} when increasing the number of feature channels~\cite{he2016deep}.}. Hence, solutions with less than $LP/(2S)$ particles provide benefits in the number of free parameters. Therefore, as the number of particles is reduced, we may parameterize the DNN with a smaller number of parameters. \emph{Most remarkably, the number of free parameters is always $2KS$ regardless of the number of parameters of a particular layer as the layer parameters are obtained via the shooting equations based on the particle states.} This is a consequence of regularizing the parameters in our loss which couples them across time at optimality. We make this clearer in what follows.

\textbf{Affine layers.} Recall that in our simple example of \S\ref{SecChoiceOfReg} the parameters $\theta(t)=[A(t),b(t)]$ of our affine\footnote{In this section, we mean affine with respect to $(\sigma(\q_j(t)))_{j \in 1,\ldots,K}$.} model are given as
\begin{equation}
  A(t) \! = {M_{A}}^{-1}\left(-\sum_{j=1}^K \p_j(t) \sigma(\q_j(t))^\top\right),\quad
  b(t) \! = {M_{b}}^{-1}\left(-\sum_{j=1}^K \p_j(t)\right)\enspace.
  \label{eq:parameters_for_affine_model}
\end{equation}
Here, $A(t)$ and $b(t)$ have $d^2$ and $d$ parameters, respectively; these parameters are indirectly given by the set of particles $\{(\q_j(t),\p_j(t))\}$ at any given time. Hence, for this model $S=d$ and $P=d(d+1)$. If we assume we have $K$ particles and compare to a discrete layer implementation of this model then the particle-based approach will have less free parameters if
\begin{equation*}
  2Kd < L d(d+1)\enspace.
\end{equation*}
Importantly, the state-space dimension, $d$, only enters the number of free parameters linearly for the particle approach ($2Kd$), while there is a quadratic dependence for direct optimization ($Ld(d+1)$). This is a direct consequence of the optimality condition which couples the parameters $\theta(t)$ across time. One can see this phenomenon in action in~\eqref{eq:parameters_for_affine_model}, where the matrix $A$ is expressed as the sum of matrices $\p_j(t) \sigma(\q_j(t))^\top$ with rank $\leq 1$. Concretely, a particle-based shooting approach uses less parameters if the number of particles $K<L(d+1)/2$. Another interesting observation based on this example is that even if we would have only considered a linear model (i.e., without the bias term, $b(t)$) the number of parameters for the particles would have still remained at $2KS$. This is again a consequence of optimality and of our parameter regularization. Note that this also means that even though our {\UpDown} model 
\begin{equation*}
\dot{\x}(t) = \theta_1(t) \sigma(\v(t)) + b_1(t),\quad
\dot{\v}(t) = \theta_2(t) \x(t) + b_2(t) + \theta_3(t) \sigma(\v(t))\,,
\end{equation*}
has significantly more parameters $\theta(t)=[\theta_1(t),b_1(t),\theta_2(t),b_2(t),\theta_3(t)]$ when directly optimized, this has no direct impact on the number of free parameters of its particle-based parameterization. Only the state-space dimension matters. Concretely, if we were to instead consider a model of the form
\begin{equation*}
\dot{\x}(t) = \theta_1(t) \sigma(\v(t)),\quad
\dot{\v}(t) = \theta_2(t) \x(t)\,,
\end{equation*}
the particle-based parameterization would stay \emph{unchanged!} Only the way how one infers $\theta(t)$ from the particles changes.

\textbf{Convolutional layers.} Shooting approaches for convolutional models can also be derived. We did not experiment with such models in this work. However, we show here that the number of free parameters may also be decreased with a particle-based approach. This will be interesting to explore in future work. Specifically, for convolutional layers a particle-based parameterization could be particularly effective as one typically has quadratic complexity in the number of filters between convolutional layers (i.e., if a layer with $N$ feature channels is followed by a layer with $M$ feature channels, this will induce the estimation of $N\times M$ convolutional filters and hence will drastically influence the number of parameters for large $N$ or $M$). In contrast, a particle-based shooting approach does not increase the number of parameters as it ties them together via the optimality conditions expressed by the shooting equations. As a rough estimate for a standard convolutional \ResNet\, for $L = 50$, $P = 100^2 \times 16$, $LP \approx 8.10^6$. Thus, if particles have size $40$, we end up with at most $10^5$ active particles.

\textbf{General remarks.} Nevertheless, all model parameters (e.g., $[A(t),b(t)]$ or all convolutional filters for a convolutional layer) are still instantiated during computation. It is important to note that regardless of the chosen number of particles, a shooting neural network solution is a possible optimal solution (for a given data set) at any given time, not only at convergence. One optimizes over the family of possible neural network models with the goal of finding the element within this family that best matches the observations.

\section{Automatic shooting}
\label{appendix:automatic_shooting}

The general shooting equations were presented in \eqref{EqOptimalityEquations}. We then proceeded to explicitly derive the shooting equations for a continuous DNN with linear-in-parameter layers and \UpDown\ layers in \S\ref{SecChoiceOfReg} and \S\ref{sec:updownNODE}, respectively. While this was instructive, it is somewhat cumbersome, in particular, for more complex models or when moving to convolutional networks. Fortunately, in practice these shooting equations do not need to be derived by hand. Indeed, they are completely specified by the Hamiltonian 
$$
H(\p,\x,\theta) = \p^\top (\dot{\x} - f(t,\x,\theta)) + R(\theta)\enspace,
$$
in the sense that the shooting equations in \eqref{EqOptimalityEquations} are computed via differentiation of $H$. Specifically, the shooting equations in \eqref{EqOptimalityEquations} are equivalently given by
\begin{align*}
	\begin{cases}
		\dot{\x} = \frac{\partial H(\p,\x,\theta)}{\partial p}\,,  \\
		\dot{\p} = -\frac{\partial H(\p,\x,\theta)}{\partial x}\,,\\
		\theta \in \argmin_{\theta} H(\p,\x,\theta)\,.
	\end{cases}
\end{align*}
As discussed above, the last equation can be replaced by solving 
$$
\partial_\theta R(\theta) - \sum_{i = 1}^N\partial_\theta f(t,\x_i,\theta)^T(\p_i) = 0\enspace.
$$
Automatic differentiation can be used to automatically obtain the shooting equations.
As fitting a shooting model requires differentiating the shooting equations, we in effect end up with differentiating twice. This can be done seamlessly using  modern deep learning libraries, such as \texttt{PyTorch}.

\section{Universality of the \UpDown\ model}
\label{appendix:updown_universal}
In this section, we set out to demonstrate that the \UpDown\ model is universal in the sense that its associated flow can come $\varepsilon$-close to the flow of any well behaving time-dependent vector field. 

Recall the single-hidden-layer vector field in \eqref{Eq:SingleHiddenLayer} with time-varying parameters $\theta(t)=(\theta_1(t),\theta_2(t),b_1(t), b_2(t))$.
While shooting with the single hidden layer vector field is theoretically appealing as it is universal \cite{Cybenko1989}, it would result in implicit shooting equations.
We first show that the \UpDown\ model introduced in \ref{sec:updownNODE} can give the same flow as the single hidden layer (Lemma \ref{lem:updown_shl}) and then leverage this relationship to show that the \UpDown\ model inherits the universality of the single hidden layer (Proposition \ref{prop:updown_universal}). 

 \begin{lemma}\label{ThLemma1}
 Consider the single-hidden-layer vector field in \eqref{Eq:SingleHiddenLayer} with $\theta_2(t)$ and $b_2(t)$ being piecewise $C^1$  and $\theta_1(t),b_1(t)$ continuous. Then, there exists a parameterization of the {\UpDown} model that gives the same flow at a fixed time, $T=1$.
 \end{lemma}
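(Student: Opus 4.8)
The plan is to exhibit an explicit solution of the \UpDown\ system whose $\x$-component coincides with the given single-hidden-layer trajectory, rather than to solve \eqref{updownODE} from scratch; this way no separate existence/uniqueness argument is needed and the lemma reduces to an algebraic matching of coefficients. Concretely, let $\x(t)$ solve \eqref{Eq:SingleHiddenLayer} on $[0,1]$ with $\x(0)=\tilde\x$. Following the motivating computation that precedes \eqref{updownODE}, I would introduce the pre-activation as the auxiliary state, $\v(t) := \theta_2(t)\x(t) + b_2(t)$, and then simply verify that the pair $(\x(t),\v(t))$ satisfies an \UpDown\ system for a suitable (time-dependent) choice of its parameters, which I denote with tildes to avoid clashing with the names in \eqref{Eq:SingleHiddenLayer}.

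First I would differentiate $\v$ and substitute $\dot\x(t)=\theta_1(t)\sigma(\v(t))+b_1(t)$, obtaining
\begin{equation*}
\dot\v(t) = \dot\theta_2(t)\,\x(t) + \dot b_2(t) + \theta_2(t)\bigl(\theta_1(t)\sigma(\v(t))+b_1(t)\bigr)\enspace.
\end{equation*}
Matching this, and the $\x$-equation, term-by-term against the \UpDown\ template in \eqref{updownODE} forces the identifications
\begin{align*}
&\tilde\theta_1(t)=\theta_1(t),\quad \tilde b_1(t)=b_1(t),\quad \tilde\theta_2(t)=\dot\theta_2(t),\\
&\tilde b_2(t)=\dot b_2(t)+\theta_2(t)b_1(t),\quad \tilde\theta_3(t)=\theta_2(t)\theta_1(t),
\end{align*}
together with the initial condition $\v(0)=\theta_2(0)\x(0)+b_2(0)$. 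This last point matters: since $\theta_2(0),b_2(0)$ depend only on time and not on the data, the map $\x(0)\mapsto\v(0)$ is a single affine map shared by all inputs, i.e.\ exactly the form $g_\Theta(\x(0))=\Theta_{12}\x(0)+b_{12}$ required by the \UpDown\ parameterization, with $\Theta_{12}=\theta_2(0)$ and $b_{12}=b_2(0)$. The regularity hypotheses enter precisely here: $\theta_2,b_2$ piecewise $C^1$ guarantees that $\dot\theta_2,\dot b_2$ exist as piecewise-continuous functions, so $\tilde\theta_2,\tilde b_2$ are well-defined for a.e.\ $t$; continuity of $\theta_1,b_1$ then makes all constructed coefficients (piecewise) continuous, so $(\x,\v)$ is absolutely continuous and solves the system in the Carath\'eodory sense.

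With the coefficients so chosen, the $\x$-equation of the constructed \UpDown\ system is identical to \eqref{Eq:SingleHiddenLayer} for every $t$, hence its $\x$-component equals the original trajectory on all of $[0,1]$; in particular the two flows agree at $T=1$, and since $\tilde\x$ was arbitrary this proves the claim.

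The one genuine technical wrinkle, which I would dispatch at the end, is dimensional: the \UpDown\ hidden state lives in $\R^{\inflation d}$ for an integer inflation factor $\inflation\ge 1$, whereas the hidden width $d'$ of \eqref{Eq:SingleHiddenLayer} need not be a multiple of $d$. I would take $\inflation=\lceil d'/d\rceil$ so that $\inflation d\ge d'$ and pad the construction with $\inflation d - d'$ inert neurons: extra coordinates of $\v$ whose associated columns of $\tilde\theta_1$ vanish (so they never feed back into $\x$) and whose own dynamics are closed off by the corresponding rows of $\tilde\theta_2,\tilde\theta_3$ and entries of $\tilde b_2$. Since these neurons do not influence the $\x$-dynamics, the equality of flows is untouched. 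Everything else is an elementary verification, so the only steps requiring care are this padding and the piecewise-regularity bookkeeping.
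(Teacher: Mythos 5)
Your proposal takes essentially the same route as the paper's proof: introduce the pre-activation $\v(t)=\theta_2(t)\x(t)+b_2(t)$ as the auxiliary state, differentiate in time, substitute the $\dot\x$-equation, and read off the \UpDown\ coefficients $\tilde\theta_2=\dot\theta_2$, $\tilde\theta_3=\theta_2\theta_1$, $\tilde b_2=\dot b_2+\theta_2 b_1$ with the affine initialization $\v(0)=\theta_2(0)\x(0)+b_2(0)$. If anything, your write-up is slightly more careful than the paper's, which drops $b_2(0)$ from the initial condition and is silent on the Carath\'eodory-type regularity and on padding the hidden dimension $d'$ up to a multiple $\inflation d$ of $d$.
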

 \begin{proof}
 We rewrite the differential equation
 \begin{equation*}
 \dot{\q}(t) = \theta_1(t) \sigma (\theta_2(t) \q + b_2(t)) + b_1(t)\,,  
 \end{equation*}
 by introducing the additional state variable $\v(t)  = \theta_2(t) \q(t)  + b_2(t)$ which we differentiate w.r.t. time.
 We obtain $\dot{\v}(t)  = \dot{\theta}_2(t) \q(t)  + \dot{b}_2(t) + \theta_2(t) \dot{\q}(t)  \,.$ Replacing $\dot{\q}(t)$ by its formula, we get
 \begin{equation*}
 \dot{\v}(t)  = \dot{\theta}_2(t) \q(t)  + \dot{b}_2(t) + \theta_2(t) \theta_1(t)( \sigma (\v(t)) + b_1(t))\,.
 \end{equation*}
 The system can be rewritten as 

 \begin{equation}
 \begin{cases}\label{EqSystemUpdown}
 \dot{\q}(t)  =  \theta_1(t) \sigma (\v(t)) + b_1(t)\,,\\
 \dot{\v}(t)  = \theta_3(t) \q(t)  + \theta_4(t)\sigma (\v(t)) + b_3(t)\,.
 \end{cases}
 \end{equation}
Therefore, with the initial condition $\v(0) = \theta_2(0)\q(0)$ and $\q(0) = \q_0$, the two systems of ordinary differential equations are equivalent.
 \end{proof}

Note that the key point in Lemma \ref{ThLemma1} is the loss of regularity in the evolution of $\theta_2$ since we differentiated once in time. For that reason, we now show that adding more dimensions using the inflation factor $\inflation$ alleviate this issue.
It is likely possible that one could prove a universality result using only $\inflation = 1$ but we shall leave this question for future work\footnote{Note that the case $\inflation =1$ is similar in its formulation to a second-order model on $\q$.}. However, experimentally, the inflation factor has a crucial effect on the performance of the optimization, as discussed in \S\ref{section:experiments}. Lemma \ref{ThLemma1} helps us establish the next result. 

\begin{lemma}\label{lem:updown_shl}
	Consider the single-hidden-layer vector field in \eqref{Eq:SingleHiddenLayer} with $\theta(t)$ being piecewise continuous. Then, there exists a parameterization of the \UpDown\ model that gives the same flow.
\end{lemma}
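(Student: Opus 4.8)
The plan is to upgrade the construction of Lemma~\ref{ThLemma1}, whose only genuinely new difficulty is that a merely piecewise continuous $\theta$ may \emph{jump} at its finitely many breakpoints (while $\theta_1,b_1$ are now also allowed to be piecewise continuous). Recall that the proof of Lemma~\ref{ThLemma1} works by maintaining, along every trajectory, the affine-in-$\x$ invariant $\v(t)=\theta_2(t)\x(t)+b_2(t)$ and then reading off $\dot{\x}=\theta_1(t)\sigma(\v)+b_1(t)$. When $\theta_2$ (or $b_2$) has a jump at a breakpoint $t_k$, this invariant forces $\v$ to jump as well, which is impossible because the {\UpDown} hidden state, being the solution of an ODE with integrable coefficients, is continuous in time. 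The key idea is to spend the inflation factor $\inflation\ge 2$ to carry \emph{two} parallel copies of the hidden state and to \emph{pre-position} the copy that will become active on the next piece, so that at each breakpoint the active copy already holds the correct post-jump value for \emph{every} trajectory at once.

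Concretely, I would fix the breakpoints $0=t_0<\dots<t_m=1$ of $\theta$ and write the hidden state as $\v=(\v^{(1)},\v^{(2)})$ with $\v^{(i)}\in\R^{d}$ (padding with unused coordinates if the single-hidden-layer width and the inflated dimension differ). On each piece one block is \emph{active} and the other \emph{pre-positioning}, alternating between consecutive pieces. The active block follows the true invariant $\v^{\mathrm{act}}=\theta_2(t)\x+b_2(t)$ exactly as in Lemma~\ref{ThLemma1}; the pre-positioning block follows a shadow invariant $\v^{(i)}=\alpha_i(t)\x+\beta_i(t)$, where $\alpha_i,\beta_i$ are any $C^1$ interpolants on the piece matching the endpoint values dictated by continuity at the left end $t_{k-1}$ and by the pre-positioning requirement at the right end $t_k$ (namely $\theta_2(t_k^{+})$ and $b_2(t_k^{+})$). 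Differentiating each invariant and substituting $\dot{\x}=\theta_1\sigma(\v^{\mathrm{act}})+b_1$ yields piecewise continuous {\UpDown} coefficients: the coefficient of $\x$ in $\dot{\v}^{(i)}$ is $\dot{\alpha}_i$, the bias is $\alpha_i b_1+\dot{\beta}_i$, and the coupling $\Theta_3$ contributes a term $\alpha_i\theta_1\sigma(\v^{\mathrm{act}})$ (an off-diagonal block feeding the active block's activation), while the outer selector reads the active block with coefficient $\theta_1$ and $b_1$ is the outer bias. These are exactly the {\UpDown} equations \eqref{updownODE}, now with $\theta(t)$ switching at the breakpoints.

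Finally I would verify the two things the construction hinges on. First, invariant maintenance on each open piece is the same product-rule computation as in Lemma~\ref{ThLemma1}, so both blocks track their prescribed affine-in-$\x$ relations there. Second, and this is the crux, continuity of each $\v^{(i)}$ across every breakpoint follows from the endpoint conditions built into the interpolants $\alpha_i,\beta_i$: because all invariants are affine in $\x$ with time-dependent coefficients we control, matching the scalar endpoint values of $\alpha_i,\beta_i$ makes $\v^{(i)}(t_k)$ agree from both sides \emph{simultaneously for all initial conditions}, so $\v$ is a genuine continuous, piecewise-$C^1$ solution of \eqref{updownODE}. The initial value $\v(0)=(\alpha_1(0)\x(0)+\beta_1(0),\,\alpha_2(0)\x(0)+\beta_2(0))$ is affine in $\x(0)$, matching the admissible initialization $g_{\Theta}$. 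Along every trajectory we then obtain $\dot{\x}=\theta_1(t)\sigma(\theta_2(t)\x+b_2(t))+b_1(t)$, i.e.\ the {\UpDown} $\x$-flow coincides with the single-hidden-layer flow on $[0,1]$, which is the claim. The main obstacle is exactly this reconciliation of a \emph{discontinuous} feeding relation with a \emph{continuous} hidden state; the extra inflation dimension is what makes it possible, and one must check the pre-positioning is consistent across all trajectories at once, which it is thanks to the affine (hence trajectory-uniform) form of the invariants.
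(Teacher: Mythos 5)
Your proof is correct, and it reaches the lemma by a genuinely more economical route than the paper's own. Both arguments rest on the same core mechanism: maintaining trajectory-uniform, affine-in-$\x$ invariants for auxiliary hidden blocks so that the necessarily \emph{continuous} \UpDown\ hidden state can be pre-positioned to the post-jump value $\theta_2(t_k^+)\x(t_k)+b_2(t_k^+)$ at each breakpoint, with the off-diagonal block of $\theta_3$ feeding the active block's activation into the idle block's evolution. The bookkeeping differs, however. The paper spends one fresh block per continuity piece: with $k$ pieces it takes inflation factor $\inflation=k$, runs block $i$ as the active invariant on piece $i$ (pre-positioned during the preceding piece as an affine function of the already-evolving state, exactly as in Lemma~\ref{ThLemma1}), and then freezes each block by constant extension once its piece is over. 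You instead recycle \emph{two} blocks in alternation, steering the idle block with freely chosen $C^1$ interpolants $(\alpha_i,\beta_i)$ from its current post-active affine relation to the next post-jump relation; this caps the inflation at $\inflation=2$ (up to the $d'$-versus-$d$ padding you note), \emph{independent of the number of breakpoints}, which is strictly stronger than the paper's $\inflation=k$ and moves toward the paper's own open question of whether $\inflation=1$ suffices. What the paper's version buys in exchange is simplicity: each block is used once, and its pre-positioning evolution is inherited for free from the affine relation to an already-evolving state, with no interpolants to construct. One caveat you share with the paper rather than introduce: the \emph{active} block's invariant must track $\theta_2(t),b_2(t)$ pointwise and be differentiated in time, so on each piece these implicitly need $C^1$ regularity (the hypothesis of Lemma~\ref{ThLemma1}), even though the lemma's statement says only piecewise continuous; this is harmless where the lemma is used, since Proposition~\ref{prop:updown_universal} only invokes piecewise-\emph{constant} fields, for which the derivative terms vanish and your construction, like the paper's, yields a valid system of the form \eqref{updownODE}.
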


\begin{proof}
Without loss of generality, we only treat the case of one discontinuity in time of the parameterization; We thus assume that $\theta(t)$ is continuous on $[0,t_1[$ and $[t_1,1]$. We consider $\q,\v_1,\v_2 \in \R^d$ such that $\q,\v_1$ are defined as in Lemma~\ref{ThLemma1}.
We now define, up to time $t_1$, $\v_2(t) = \theta(t_1) \v_1(t) + \theta_2(t_1)$ which implies (differentiating w.r.t. time) that $\v_2$ follows an evolution equation similar to $\v_1$ and thus can be encoded in the general form of~\eqref{EqSystemUpdown}.
Now, $\q(t),\v_2(t)$ are defined on $[t_1,1]$ by the evolution \eqref{EqSystemUpdown} in order to coincide with the flow of single-hidden-layer vector field on $[t_1,1]$, $ \dot{\q}(t)  =  \theta_1(t) \sigma (\v_2(t)) + b_1(t)$ and $\dot{\v}_2(t)  = \theta_3(t) \q(t)  + \theta_4(t)\sigma (\v_2(t)) + b_3(t)$ for well chosen parameters as in  Lemma~\ref{ThLemma1}. Since the value of $\v_1(t)$ is not used in the evolution equation of $\q(t) $, we can simply extend it by $\v_1(t) = \v_1(t_1)$ which is a valid evolution equation for~\eqref{EqSystemUpdown}.
\par
In the general case, we decompose the time interval $[0,1]$ into $k$ intervals $[t_i,t_{i + 1}[$ on which $\theta(t)$ is continuous and the proposed method can be directly extended using an inflation factor $\inflation = k$, introducing $\v_k \in \R^d$.
\end{proof}
Note that the result of this lemma gives an equality between the two flows defined on the \emph{whole} space $\R^d$. 
The next result is an approximation result which holds on a compact domain $C \subset \R^d$. For a function $f: \R^d \to \R$,  we denote $\| f \|_{C,\infty} = \sup_{x \in C} |f(x)|$.

\begin{proposition}\label{prop:updown_universal}
	The \UpDown\ model is universal in the class of time-dependent vector fields. Let $C \subset \R^d$ be a compact domain. For every time-dependent vector field (such that it is time continuous and is Lipschitz in space) $w: [0,1] \times \R^d \mapsto \R^d$ and its associated flow $\varphi(t,\x)$ there exist time dependent parameters of the \UpDown\ model such that 

	\begin{equation*}
	\begin{cases}
	\dot{\q}(t)  = \theta_1(t)  \sigma(\v(t)) + b_1(t) \,,\\
	\dot{\v}(t)  = \theta_2(t) (\q(t) ) + b_2 + \theta_3(t)  \sigma(\v(t) )\,,
	\end{cases}
	\end{equation*}

	is $\varepsilon$-close to the solution $\varphi(1,\x)$, e.g. $\| \varphi(1,\x) - \q(1,\x) \|_{C,\infty} \leq \varepsilon$.
\end{proposition}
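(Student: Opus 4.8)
The plan is to reduce the approximation of the flow of a general time‑dependent Lipschitz vector field $w$ to a case already settled by Lemma~\ref{lem:updown_shl} — namely a \emph{piecewise‑continuous single‑hidden‑layer} vector field — and then to convert uniform closeness of the vector fields into closeness of their flows at $T=1$ by a Gronwall estimate. The two uses of the hidden dimension flagged after the statement correspond exactly to the two approximation scales: neurons handle the spatial approximation on each time slice, while the inflation factor $\inflation$ handles the temporal (piecewise) structure.

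First I would fix the compact set on which all estimates live. Since $\varphi$ is continuous and $[0,1]\times C$ is compact, the set $\{\varphi(t,\x):t\in[0,1],\,\x\in C\}$ is compact; I would enlarge it to a compact neighborhood $K\subset\R^d$ chosen so that any trajectory staying within distance $\delta$ of a true trajectory remains in $K$. All uniform bounds below are taken over $K$. Next I would build a piecewise‑constant‑in‑time single‑hidden‑layer field $\hat w$ with $\sup_{t\in[0,1],\,\x\in K}\|w(t,\x)-\hat w(t,\x)\|\le\delta$ in two steps. \emph{Temporal step}: partition $[0,1]$ into $N$ subintervals and freeze $w$ at the left endpoint of each; uniform continuity of $t\mapsto w(t,\cdot)$ on $K$ (which follows from continuity in time together with the uniform spatial Lipschitz bound) makes the error at most $\delta/2$ once $N$ is large. \emph{Spatial step}: on each piece the frozen map $w(t_i,\cdot)$ is a fixed continuous function on the compact $K$, so by the universal approximation theorem for single‑hidden‑layer networks \cite{Cybenko1989}, applied componentwise with enough neurons, there is a single‑hidden‑layer field within $\delta/2$ of it on $K$. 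The resulting $\hat w$ is piecewise constant, hence piecewise continuous, in time.

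I would then apply Lemma~\ref{lem:updown_shl} directly to $\hat w$: as a piecewise‑continuous single‑hidden‑layer field with $N$ pieces, it admits a parameterization of the \UpDown\ model, with inflation factor $\inflation=N$, whose flow coincides \emph{exactly} with the flow of $\hat w$; this realizes the approximating flow as $\q(1,\cdot)$. Finally, comparing the two flows by Gronwall: if $\dot a=w(t,a)$ and $\dot b=\hat w(t,b)$ with $a(0)=b(0)=\x$ and $L$ the spatial Lipschitz constant of $w$, then $\tfrac{d}{dt}\|a(t)-b(t)\|\le L\|a(t)-b(t)\|+\delta$, whence $\|a(1)-b(1)\|\le \delta\,(e^{L}-1)/L$. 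Choosing $\delta$ with $\delta\,(e^{L}-1)/L\le\varepsilon$ yields $\|\varphi(1,\x)-\q(1,\x)\|_{C,\infty}\le\varepsilon$.

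\textbf{Main obstacle.}
The hard part will be the interlocking of the two approximation scales with the a priori / bootstrapping argument guaranteeing that the approximating trajectory never leaves the compact set $K$ on which the network approximation was controlled — this is what lets the uniform vector‑field bound feed the Gronwall estimate — together with correctly counting the number of temporal pieces $N$ so that it matches the inflation factor required by Lemma~\ref{lem:updown_shl}. The remaining steps (uniform temporal freezing, componentwise Cybenko approximation, and the Gronwall comparison) are standard.
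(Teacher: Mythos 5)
Your proposal follows essentially the same route as the paper's proof: approximate $w$ by a single-hidden-layer vector field that is piecewise constant in time (via Cybenko's theorem on each time slice), compare the two flows at $T=1$ with a Gr\"onwall estimate, and invoke Lemma~\ref{lem:updown_shl} to realize the piecewise-continuous approximant exactly as an \UpDown\ flow with inflation factor equal to the number of time pieces. Your treatment is in fact slightly more careful than the paper's on one point --- the bootstrapping argument keeping the approximating trajectory inside the compact set $K$ where the network approximation is controlled, which the paper's proof uses implicitly when it bounds $\| v(t,\varphi^{w}(t,\x)) - w(t,\varphi^{w}(t,\x))\|$ by the sup-norm over $B(0,r)$ --- but this is a refinement of the same argument, not a different one.
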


\begin{proof}
	The proof is standard and we include it here for self-containedness. It is the consequence of \cite{Cybenko1989} and Lemma \ref{lem:updown_shl}. Let $B(0,r)$ a ball of radius $r$ in $\R^d$ which contains $\varphi(t,\x)$ for all time $t \in [0,1]$. The flow associated with a given time-dependent vector field $v(t,\cdot)$ can be approximated by a vector field which is piecewise constant in time; i.e. let $\varepsilon >0$ be a positive real, (by continuity in time of $v(t,\cdot)$) there exists a decomposition of $[0,1]$ into $k$ intervals $[t_i,t_{i + 1}]$ and Lipschitz vector fields  $v_i(\x) = f(\x,\theta_i)$ where $f$ is the single hidden layer in \eqref{Eq:SingleHiddenLayer} such that
$\| v_i(\x) - v(t,\x) \|_{B(0,r),\infty} \leq \varepsilon$ for $t \in [t_i,t_{i + 1}]$. Denote by $w(t,\cdot)$ the time-dependent vector field defined by $w(t,\cdot) = v_i(\cdot)$  for all $t \in [t_i,t_{i + 1}]$. Thus, denoting the flow of $v(t,\cdot)$ by $\varphi^v$ and the flow of $w(t,\cdot)$ by $\varphi^w$, we get
\begin{align*}
&\|\varphi^v(1,\x) - \varphi^{w}(1,\x) \| \leq \int_0^1 \| v(t,\varphi^v(t,\x)) - v(t,\varphi^{w}(t,\x))\|  + \| v(t,\varphi^{w}(t,\x)) - w(t,\varphi^{w}(t,\x))\| \ud t \,\\
&\|\varphi^v(1,\x) - \varphi^{w}(1,\x) \|_{C,\infty}  \leq \int_0^1 \on{Lip}(v) \|\varphi^v(t,\x) - \varphi^{w}(t,\x) \|_{C,\infty} +  \| v(t,\cdot) - w(\cdot)\|_{B(0,r),\infty} \ud t\\
& \phantom{\|\varphi^v(1,\x) - \varphi^{w}(1,\x) \|_{K,\infty} } \leq  \int_0^1 \on{Lip}(v) \|\varphi^v(t,\x) - \varphi^{w}(t,\x) \|_{C,\infty} \ud t + \varepsilon\,,
\end{align*}
where $\on{Lip}(v)$ denotes a bound on the Lipschitz constant of $v(t,\x)$ w.r.t. $\x \in B(0,r)$ for all $t\in [0,1]$.
Then, the Gr\"onwall lemma \cite{laurentbook} gives 
\begin{equation}
\|\varphi^v(1,\x) - \varphi^{w}(1,\x) \|_{C,\infty} \leq \varepsilon e^{\on{Lip}(v)}\,.
\end{equation}
By Lemma~\ref{lem:updown_shl}, $\varphi^{w}(1,\x)$ can be approximated by the flow of the \UpDown\ and the result is obtained via the triangle inequality.
\end{proof}

In this section, we focused on a universality result in the space of time-dependent vector fields. Interestingly, due to the additional dimensions, it is likely that the model is universal in the space of functions as well. This conjecture is supported by the quadratic 1D function regression example which shows that the \UpDown\ model is able to capture some maps which are not homeomorphic.
We leave this question for future work.

\section{Experimental settings}
\label{appendix:experimental_settings}

This section describes our experimental settings. We use our {\UpDown} model for all experiments and simply use a weighted Frobenius norm penalty for all parameters. Specifically, we weigh this penalty for all parameters with $1$ except for, $\theta_3$ which we penalize by 10. In our experiments, we have observed better convergence properties for higher penalties on $\theta_3$. This might be due to the special role that $\theta_3$ plays in the model as it subsumes a quadratic term in the original derivation of the {\UpDown} model (see \S\ref{sec:updownNODE}). In all experiments, we also optimize over the affine map from $x(0)$ to $v(0)$ for the data evolution.

\textbf{Simple function regression.} We use 500 epochs for all experiments. For all particle-based experiments we freeze the positions of the particles for the first 50 epochs. We use a {\ReLU} activation function and the MSE loss. We weigh the MSE loss by 100 and the parameter norm loss by 1. We use 500 training samples, 1,000 testing samples and 1,000 validation samples and a batch size of 50. Note that for these simple examples there is, in practice, no real difference between the training, testing, and validation data, as the number of samples is large and the domain is $[-1.5,1.5]$. We initialize the particle positions uniformly at random in $[-1.5,1.5]$ and draw the momenta from a Gaussian distribution with zero mean and standard deviation $0.1$. All time-integrations are done via a fourth-order Runge-Kutta integrator with time-step 0.1. For optimization, we use {\Adam} with a learning rate of 0.01 and the \texttt{ReduceLROnPlateau} learning rate scheduler of {\PyTorch} with a learning rate reduction factor of 0.5.

\textbf{Spiral.} The spiral data is generated between time $t=0$ and $t=10$ with 200 uniformly spaced timepoints. Training is only on small time snippets with an approximate length of $0.25$ time-units. Evaluation is on these short time snippets as well as on the entire trajectory by pasting together solutions for these short time snippets, i.e., an individual short solution starts where the previous one ends. Settings for the spiral are the same as for the simple function regression with the following exceptions. We use 1,500 epochs and the step-size for the fourth-order Runge-Kutta integrator is 0.05. The MSE loss is still weighted by 100, but the parameter norm loss only by 0.01. We randomly draw 100 new training samples during each epoch and use 100 evaluation samples and 1,000 short range samples and 1 long-range testing sample. All samples are randomly drawn from the trajectory. However, as the trajectory is traversed at highly nonuniform speed the samples are drawn from a uniform distribution across the trace of the spiral. As for the simple function regression experiment, there is little practical difference between the training, validation, and testing data as the problem is so simple. However, this is not of concern in these experiments as the prime objective is to study the fitting behavior of the different models. We use a batch size of 100.

\textbf{Rotating MNIST.} We use the data provided by the authors of \cite{Yildiz19a} and follow the same autoencoder architecture, except that our encoder maps into a 20-dimensional 
representation space. The 
number of particles is set to 100 and the inflation factor $\alpha$ is set to 10. For optimization, we use 
{\Adam} with a learning rate of 0.001 and the \texttt{CosineAnnealingLR} learning rate scheduler of {\PyTorch}. We train for 500 epochs with a batch size of 25 and the parameter norm loss set to 0.1.

\textbf{Bouncing balls.} As in the rotating MNIST experiment, we rely on the data provided by the authors of \cite{Yildiz19a},
follow their autoencoder architecture and set the dimensionality
of the representation space of the encoder to 50. The first three images of each sequence are provided to the encoder by concatenating the images along the channel dimension. The inflation factor $\alpha$ is set to 20 and we use 100 particles. 
We optimize over 100 epoch using {\Adam} with the \texttt{CosineAnnealingLR} learning rate scheduler of {\PyTorch}, the initial learning rate is set to 0.001 and the parameter norm loss is set to 0.0001.

\section{Additional results}
\label{appendix:figures_results}


\textbf{Simple function regression.}
In Section \ref{section:experiments}, we considered approximating a quadratic-like function. Here we show parallel results for approximating a cubic function $y=x^3$. We will also include some additional figures for the quadratic-like regression function. Note that whereas the cubic function is invertible (but not diffeomorphic), the quadratic-like one considered in Section~\ref{section:experiments} is a simple example of a non-invertible function. Tab.~\ref{tab:number_of_parameters_simple_functions} shows the number of parameters for the four different formulations for both regression functions. Fig.~\ref{fig:loss_and_complexity_cubic} shows for the cubic regression the test loss and the network complexity, as measured by the Frobenius norm \cite{neyshabur_exploring_2017}, for the four formulations. On average the particle-based approaches show the best fits with the lowest complexity measures, indicating the simplest network parameterization. Note however that while the dynamic particle approach greatly outperformed the static particle approach for the quadratic-like function (see Fig.~\ref{fig:loss_and_complexity_quadratic}) this is not the case here. In fact, the static particle approach shows slightly better fits than the dynamic one. This might be because the cubic function is significantly simpler to fit and hence may not benefit as much from the dynamic approach. To illustrate that fitting the quadratic-like function is indeed harder, Figs.~\ref{fig:cubic_fit_across_particles} and~\ref{fig:quadratic_fit_across_particles} show function fits for different numbers of particles for the cubic function and the quadratic-like function, respectively. All these fits are for the particle-based dynamic {\UpDown} model. Clearly, very few particles can achieve reasonable fits for a simple function. As little as two particles already show a good fit for the cubic function, whereas the quadratic-like function requires with more particles. This supports our hypothesis that fitting more complex functions may require more particles.

Since our approach is based on the time-integration of the {\UpDown} model it is interesting to see 1) how the mapping is expressed across time and 2) how the parameters, $\theta(t)$, of the {\UpDown} model change over time.  Fig.~\ref{fig:data_mapping_simple_functions} shows example mappings for the cubic and the quadratic-like function, respectively. The estimated mappings are highly regular. Lastly, Figs.~\ref{fig:weight_evolution_cubic} and~\ref{fig:weight_evolution_quadratic} show the time-evolutions of the model parameters for the cubic and the quadratic-like function for two different inflation factors. While different parameters show different dynamics, clear changes over time can be observed. In particular, $\theta_2(t)$ and $b_2(t)$ show strong changes. These parameters mostly control the behavior of the hidden high-dimensional state, $v$, as $\theta_3(t)$ is penalized significantly more in our model (see Sec.\ref{appendix:experimental_settings}) and consequently shows more moderate changes.

\begin{table}[h!]
\begin{small}
	\caption{Number of parameters for the simple function regression cubic and quadratic experiments.}
	\label{tab:number_of_parameters_simple_functions}
	\begin{center}
	\vskip1ex
	\begin{tabular}{r|c|cccccc}
		& & \multicolumn{6}{c}{\textbf{Inflation factor}} \\\hline
		& \textbf{\#Particles} & 4 & 8 & 16 & 32 & 64 & 128 \\ \hline
		\multirow{4}{*}{static/dynamic w/ particles} & 2 & 28 & 52 &  100 & 196 & 388 &  772 \\
		& 5 & 58 &  106 &  202 & 394 & 778 & 1,546 \\
		& 15 & 158 & 286 & 542 & 1,054 & 2,078 & 4,126 \\
		& 25 & 258 & 466 & 882 & 1,714 & 3,378 & 6,706 \\ \hline
		dynamic direct & n/a & 153 & 461 & 1,557 & 5,669 & 21,573 & 84,101 \\ \hline
		static direct & n/a & 37 & 105 & 337 & 1,185 & 4,417 & 17,025
	\end{tabular}
	\end{center}
\end{small}
\end{table}

\begin{figure}[h!]
	\centering
	\includegraphics[width=0.9\textwidth]{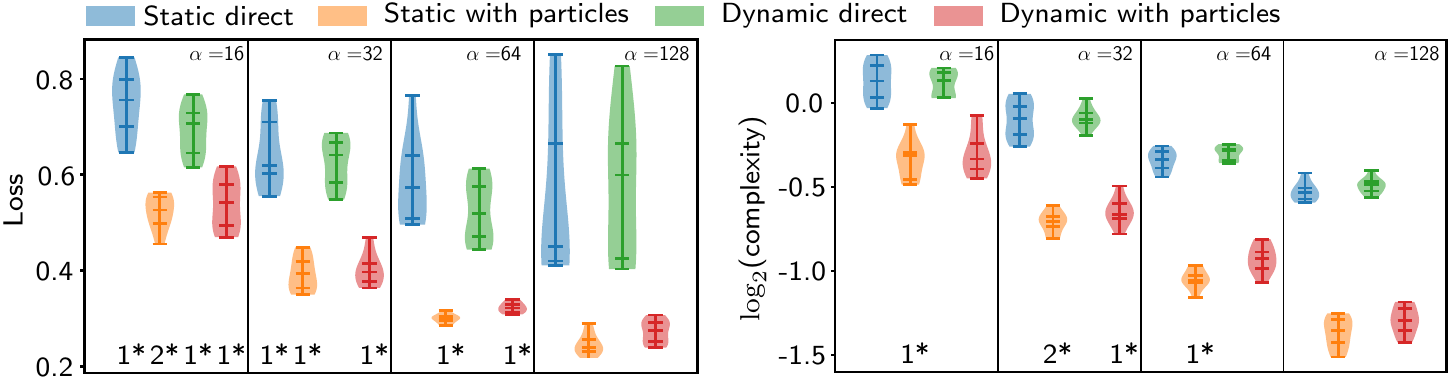}
	\caption{Function fit (15 particles) for cubic $y=x^3$ for 10 random initializations. \emph{Left}: Test loss; \emph{Right}: time-integral of $\log_2$ of the Frobenius norm complexity. Lower is better for both measures. * indicates number of removed outliers (outside the interquartile range (IQR) by $\geq1.5\times$ IQR); $\alpha$ denotes the inflation factor.}
	\label{fig:loss_and_complexity_cubic}
\end{figure}
\begin{figure}[h!]
\begin{center}
\includegraphics[width=0.8\textwidth]{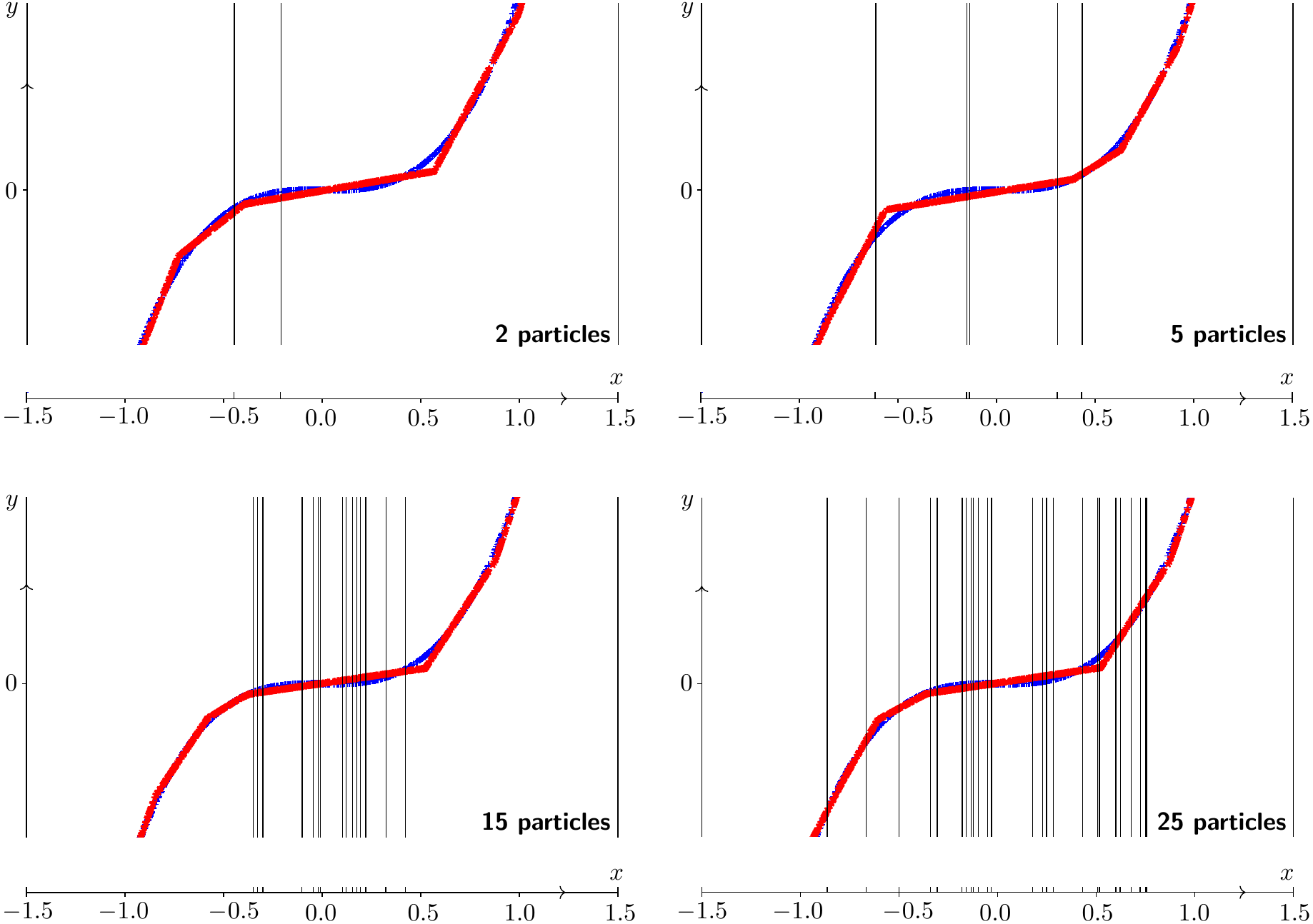}
\end{center}
	\caption{Fits for the \emph{cubic} function with inflation factor 16 and for different numbers of particles. Vertical lines indicate particle positions after optimization. While subtle, the figures suggest that using more particles allows for better approximation of the function. This is confirmed by the test loss values in Fig.~\ref{fig:loss_and_complexity_cubic} (bottom left).}
	\label{fig:cubic_fit_across_particles}
\end{figure}


\begin{figure}[h!]
\begin{center}
\includegraphics[width=0.8\textwidth]{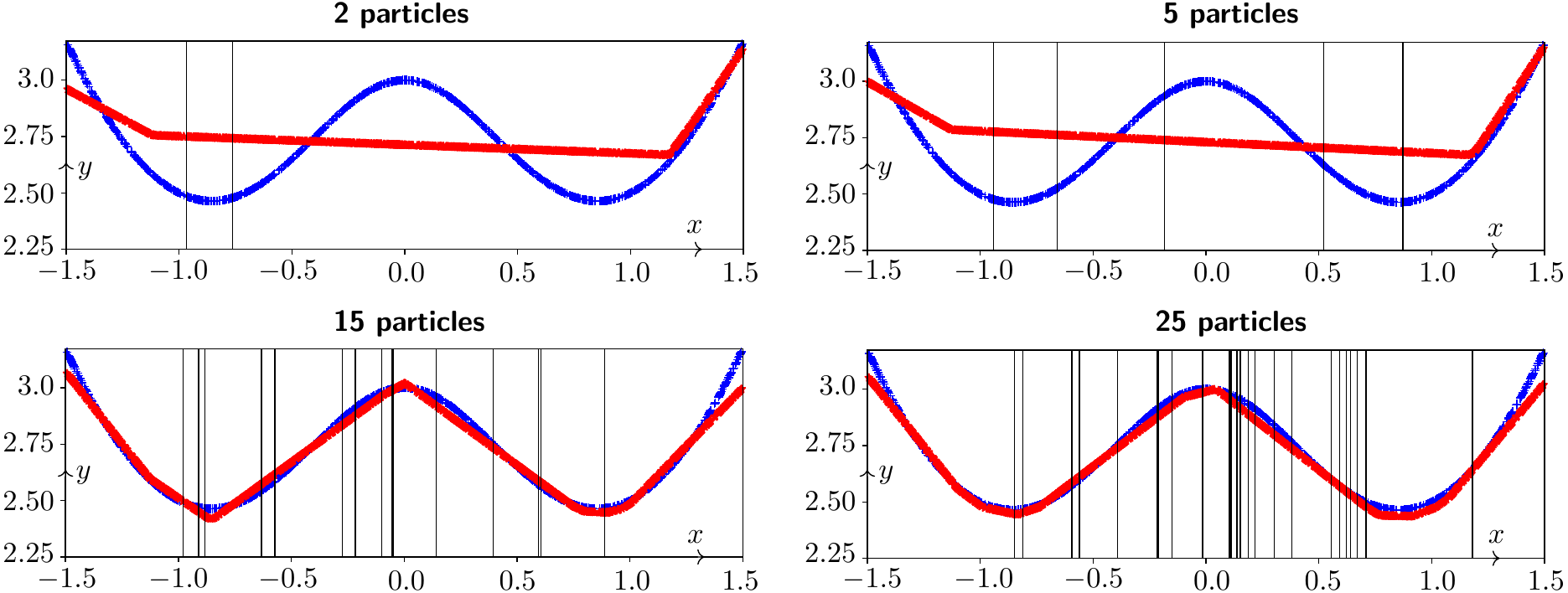}
\end{center}
	\caption{Fits for the \emph{quadratic-like} function for inflation factor 16 with different numbers of particles. Vertical lines indicate particle positions after optimization. As this function is more complex than the cubic function 2 and 5 particles is not sufficient for a fit. But 15 and 25 particles result in a well-fitting approximation.}
	\label{fig:quadratic_fit_across_particles}
\end{figure}


\begin{figure}[h!]
\centering
	\includegraphics[width=0.9\columnwidth]{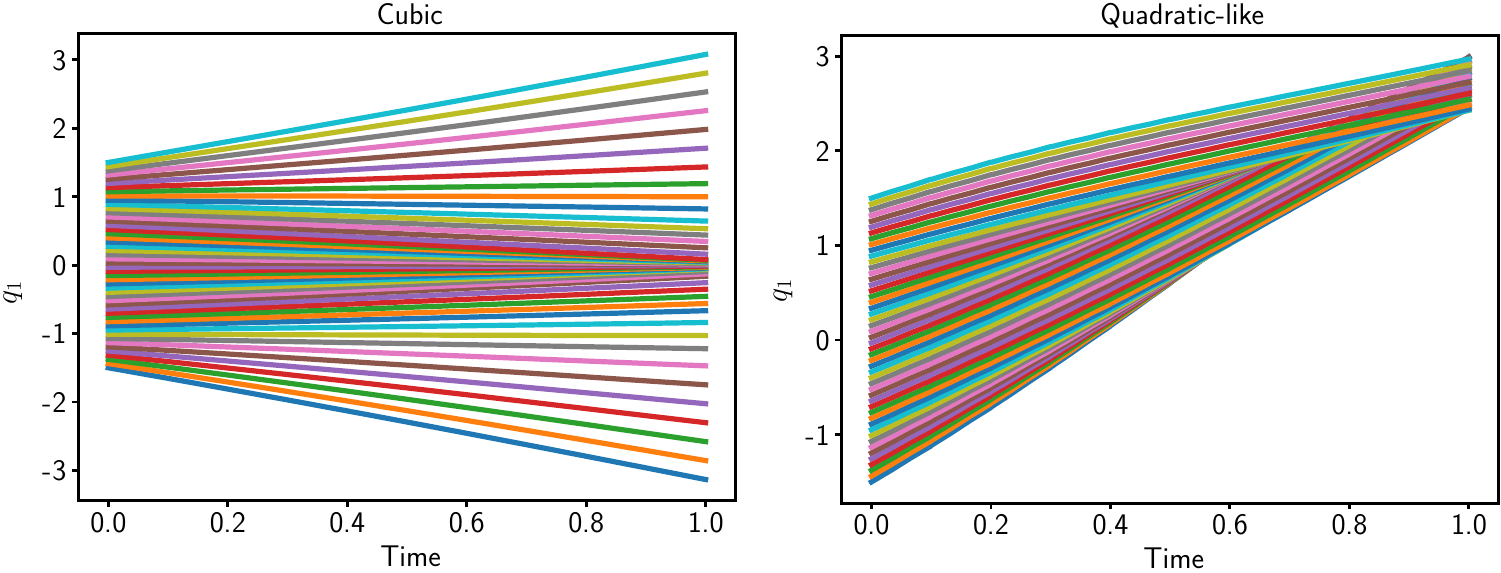}
	\vskip1ex
	\caption{Mapping of the \emph{cubic} function (left) and the \emph{quadratic-like} function (right). As can be seen, the mappings are \emph{highly regular}.}
	\label{fig:data_mapping_simple_functions}
\end{figure}

\begin{figure}
\includegraphics[width=1.0\textwidth]{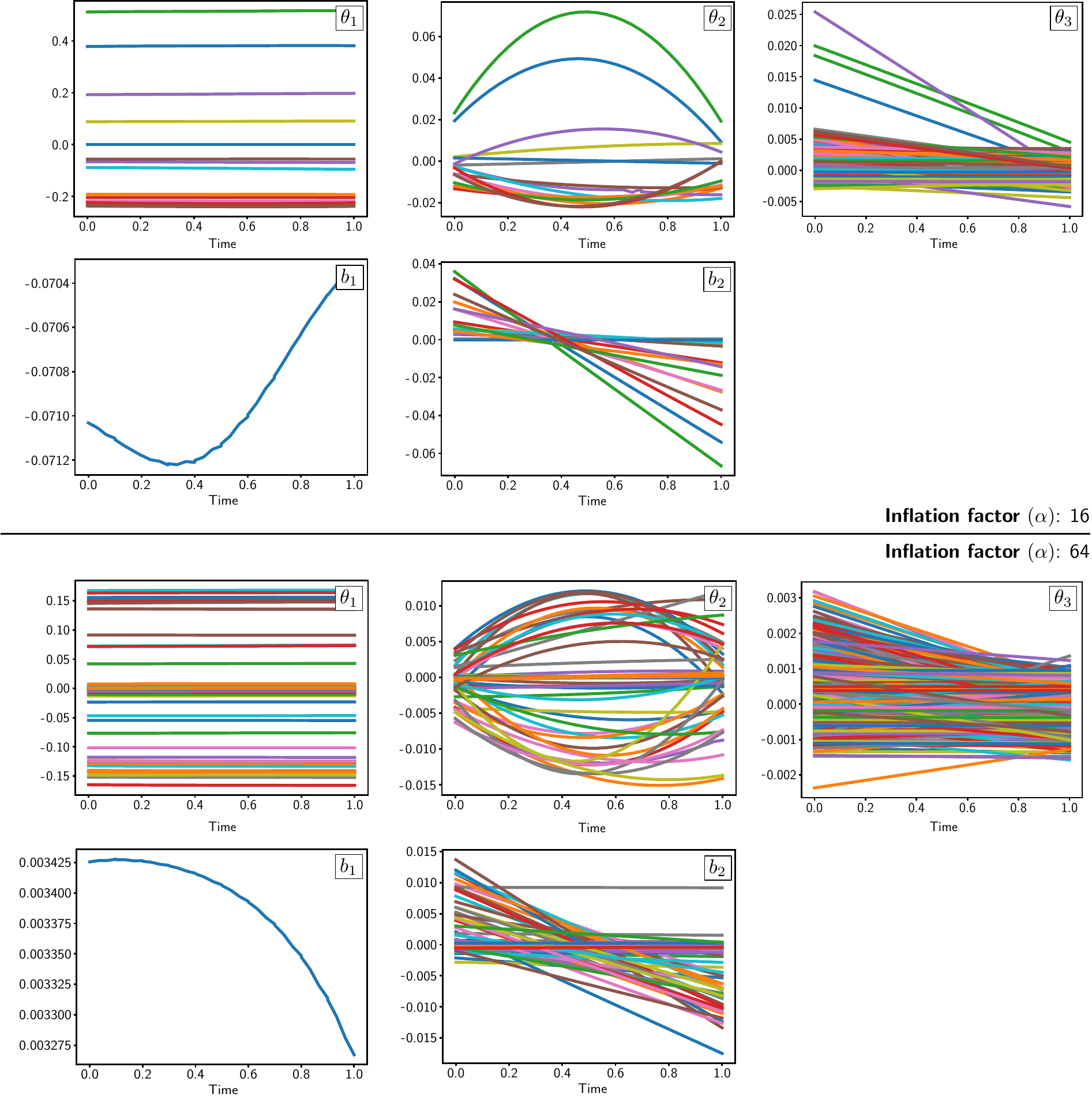}
\caption{Weight evolution across time (i.e., continuous depth) for 15 particles when fitting the \emph{cubic} function using the \UpDown\ model: $\dot{\x}(t) = \theta_1(t) \sigma(\v(t)) + b_1(t),~
\dot{\v}(t) = \theta_2(t) \x(t) + b_2(t) + \theta_3(t) \sigma(\v(t))$. Results are for the dynamic with particles approach. Top: Inflation factor 16. Bottom: Inflation factor 64. Changes in parameter values can clearly be observed. 
}
\label{fig:weight_evolution_cubic}
\end{figure}

\begin{figure}
\includegraphics[width=1.0\textwidth]{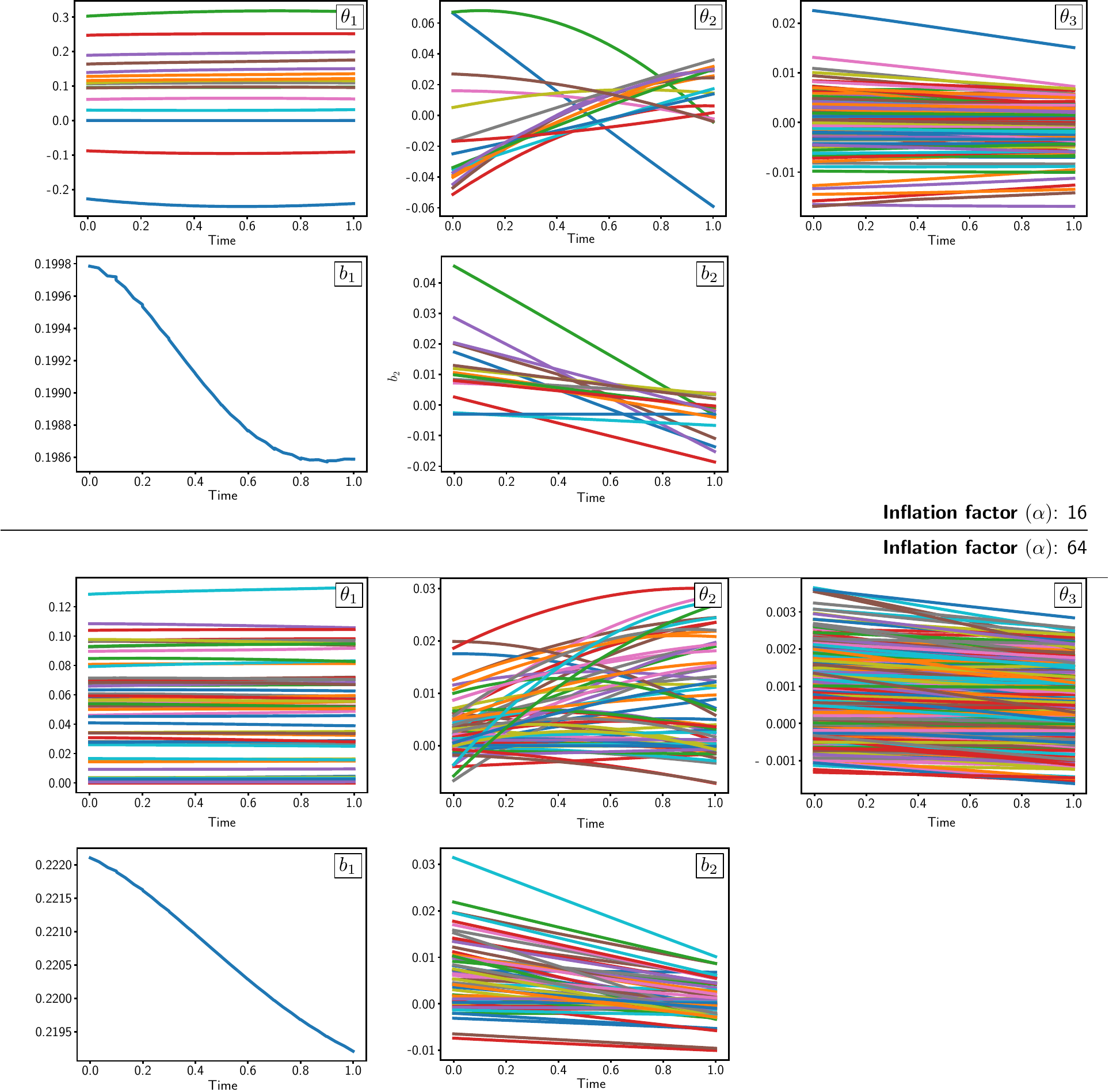}

\caption{Weight evolution across time (i.e., continuous depth) for 15 particles when fitting the \emph{quadratic-like} function using the \UpDown\ model: $\dot{\x}(t) = \theta_1(t) \sigma(\v(t)) + b_1(t),~
\dot{\v}(t) = \theta_2(t) \x(t) + b_2(t) + \theta_3(t) \sigma(\v(t))$. Results are for the dynamic with particles approach. Top: Inflation factor 16. Bottom: Inflation factor 64. Changes in parameter values can clearly be observed. 
}
\label{fig:weight_evolution_quadratic}
\end{figure}

\textbf{Spiral.} Tab.~\ref{tab:number_of_parameters_spiral} shows the number of parameters in each of the four formulations for the spiral experiment. This table complements the Table in Fig.~\ref{fig:spiral_short_long_range} which only showed the number of parameters when using 15 particles.



\begin{table}[h!]
	\caption{Number of parameters for the spiral experiment.}
	\label{tab:number_of_parameters_spiral}
	\centering
	\vskip1ex
	\begin{small}
	\begin{tabular}{r|c|cccc}
		& & \multicolumn{4}{c}{\textbf{Inflation factor}} \\\hline
		& {\textbf{\#Particles}} & 16 & 32 & 64 & 128 \\ \hline
		\multirow{3}{*}{static/dynamic w/ particles} & 15 & 1,116 & 2,172 & 4,284 & 8,508 \\
		& 25 & 1,796 & 3,492 & 6,884 & 13,668 \\
		& 50 & 3,496 & 6,792 & 13,384 & 26,568\\ \hline
		static direct & n/a & 1,282 & 4,610 & 17,410 & 67,586\\ \hline
		dynamic direct & n/a & 6,026 & 22,282 & 85,514 & 334,858
	\end{tabular}
	\end{small}
\end{table}

\end{document}